\documentclass[twoside,web]{ieeecolor}
\usepackage{tmi}
\usepackage{cite}
\usepackage{amsmath,amssymb,amsfonts}
\usepackage{algorithmic}
\usepackage{graphicx}
\usepackage{subcaption}
\usepackage{textcomp}
\usepackage{array}

\usepackage{booktabs}
\usepackage[normalem]{ulem}
\useunder{\uline}{\ul}{}
\usepackage{makecell}
\usepackage{multirow}
\usepackage{stfloats}


\usepackage{amsmath,amsfonts,bm}









\def\eqref#1{equation~\ref{#1}}









\def\1{\bm{1}}






\def\rmX{{\mathbf{X}}}
\def\rmY{{\mathbf{Y}}}
\def\rmZ{{\mathbf{Z}}}





\DeclareMathAlphabet{\mathsfit}{\encodingdefault}{\sfdefault}{m}{sl}
\SetMathAlphabet{\mathsfit}{bold}{\encodingdefault}{\sfdefault}{bx}{n}











\newcommand{\KL}{D_{\mathrm{KL}}}



\usepackage{colortbl}
\definecolor{mygray}{gray}{1}
\definecolor{mylightgray}{gray}{1}
\newtheorem{proposition}{Proposition}

\usepackage{color}
\newcommand\red{\textcolor{red}}

\newcommand\blue{\textcolor{black}}

\def\BibTeX{{\rm B\kern-.05em{\sc i\kern-.025em b}\kern-.08em
    T\kern-.1667em\lower.7ex\hbox{E}\kern-.125emX}}
\markboth{\journalname, VOL. XX, NO. XX, XXXX 2020}
{Author \MakeLowercase{\textit{et al.}}: Preparation of Papers for IEEE TRANSACTIONS ON MEDICAL IMAGING}
\begin{document}
\title{MedMAP: Promoting Incomplete Multi-modal Brain Tumor Segmentation with Alignment}
\author{Tianyi Liu, Zhaorui Tan, Muyin Chen, Xi Yang, Haochuan Jiang, Kaizhu Huang
\thanks{Tianyi Liu is with the University of Liverpool and
School of Robotics, XJTLU Entrepreneur College (Taicang), Xi’an Jiaotong-Liverpool University;
Zhaorui Tan is with the University of Liverpool and
School of Advanced Technology, Xi’an Jiaotong-Liverpool University; 
Muyin Chen and Haochuan Jiang are with the School of Robotics, XJTLU Entrepreneur College (Taicang), Xi’an Jiaotong-Liverpool University;
Xi Yang is with School of Advanced Technology, Xi’an Jiaotong-Liverpool University;
Kaizhu Huang is with Data Science Research Center, Duke Kunshan University}
\thanks{The work was partially supported by the following: National Natural Science Foundation of China under No.92370119, No. 62206225, and No. 62376113; 
Jiangsu Science and Technology Program (Natural Science Foundation of Jiangsu Province) under No. BE2020006-4;
Natural Science Foundation of the Jiangsu Higher Education Institutions of China under No. 22KJB520039;
XJTLU Research Development Funding 20-02-60. 
Computational resources used in this research are provided by the School of Robotics, XJTLU Entrepreneur College (Taicang), Xi'an Jiaotong-Liverpool University.}
}

\maketitle

\begin{abstract}
Brain tumor segmentation is often based on multiple magnetic resonance imaging (MRI). However, in clinical practice, certain modalities of MRI may be missing, which presents a more difficult scenario. To cope with this challenge, Knowledge Distillation, Domain Adaption, and Shared Latent Space have emerged as commonly promising strategies. However, recent efforts typically overlook the modality gaps and thus fail to learn important invariant feature representations across different modalities. Such drawback consequently leads to limited performance for missing modality models. 
{To ameliorate these problems, pre-trained models are used in natural visual segmentation tasks to minimize the gaps. However, promising pre-trained models are often unavailable in medical image segmentation tasks.}
{Along this line,}
in this paper, we propose a novel paradigm that aligns latent features of involved modalities to a well-defined distribution anchor as the {substitution of the pre-trained model}. As a major contribution, we prove that our novel training paradigm ensures a tight evidence lower bound, thus theoretically certifying its effectiveness. Extensive experiments on different backbones validate that the proposed paradigm can enable invariant feature representations and produce models with narrowed modality gaps. Models with our alignment paradigm show their superior performance on both BraTS2018 and BraTS2020 datasets.
\end{abstract}

\begin{IEEEkeywords}
Brain Tumor Segmentation, Missing Modality, Modality Gap, Multi-modal Segmentation, Alignment, 
\end{IEEEkeywords}

\section{Introduction}
\label{sec:introduction}
\IEEEPARstart{B}{rain} tumors 
pose severe risks to human life, making precise medical segmentation essential as it devises effective treatment planning and strategies~\cite{chen2021learning}.
Brain tumor segmentation methods usually employ
Multiple Magnetic Resonance Imaging (MRI) visualizations, including Fluid Attenuation Inversion Recovery (Flair), contrast-enhanced T1-weighted (T1ce), T1-weighted (T1), and T2-weighted (T2), as illustrated in Figure~\ref{fig:banner_modality}, as multiple modalities~\cite{zhao2022modality}.
Particularly,
the aforementioned modalities complement each other to understand both the physical structure and physiopathology of tumors; their combination naturally leads to improved segmentation performance~\cite{lindig2018evaluation,menze2014multimodal,patil2013medical,maier2017isles}. 
For instance, T2 and Flair imaging modalities are valuable for assessing the enhanced tumor (ET), whereas T1 and T1ce modalities are effective in delineating the core of the tumor, 
including the necrotic, non-enhancing, and enhancing regions (NCR/NET)~\cite{ding2021rfnet}. 
However, not all modalities are always available in real clinical practice due to scenarios such as data corruption and/or variations in scanning protocols, leading to crucial challenges of developing a generalized multi-modal approach that copes with the absence of certain modalities~\cite{chen2023query,qiu2023scratch,qiu2023modal,liu2021incomplete}.
\begin{figure}
    \centering
    \includegraphics[width=\linewidth]{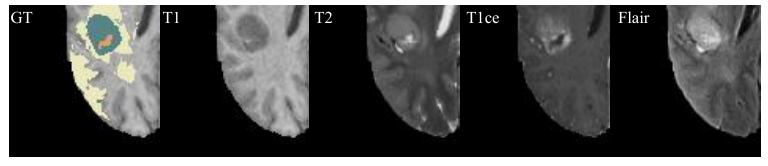}
    \caption{Images of the four modalities in the brain tumor dataset with the ground truth (GT) segmentation label. Different colors represent different organs: Orange: NCR/NET, Yellow: ED, and Blue: ET.}
    \label{fig:banner_modality}
\end{figure}

In response to the challenge of missing modalities, previous substantial efforts have converged on two primary strategies: Knowledge Distillation (KD), and Domain Adaptation (DA). The KD-based attempts involve transferring the knowledge from the complete multi-modality teacher to the incomplete missing-modality student~\cite{hu2020knowledge,chen2021learning,wang2023prototype, azad2022smu}, while the methods in DA leverage alignment methods to bridge the gap between the model trained on complete modalities and those trained on incomplete ones~\cite{wang2021acn}. 
However, these two strategies ignore gaps existing between the modalities as they often process different modalities as separate channels, leading to a failure to capture the important invariance across modalities. 
To better alleviate the modality gaps, 
another branch of methods, named Shared Latent Space (SLS), transfers each modality into a common representation space shared among all modalities~\cite{havaei2016hemis,varsavsky2018pimms,zhang2022mmformer,ding2021rfnet}.
It argues that they can learn the inter-modality invariance~\cite{zhang2022mmformer}, but the detailed analysis of minimized modality gaps seems missing. 
As illustrated in Figure.~\ref{fig:banner_tsne_visual}, we reveal that the modality gaps still exist in most previous methods. Particularly, SLS-based methods have been observed to alleviated the modality gaps, but these gaps have not been sufficiently eliminated.

{
This paper mainly studies the critical 
but an under-explored question for missing modality segmentation: 
Can minimizing modality gaps improve the model's generalization and lead to better missing modality performance? 
Our extensive experiments unveil that minimizing modality gaps leads to consistent improvements in various missing modalities approaches, including those following KD, DA, and SLS strategies. 
Furthermore, to achieve a convincing minimization of modality gaps for different approaches, we propose the \textbf{Med}ical \textbf{M}odality \textbf{A}lignment \textbf{P}aradigm (MedMAP), which is feasible for most existing missing modality approaches.}

The design of MedMAP is inspired by minimizing domain gaps for natural visual segmentation~\cite{choi2021robustnet,tan2024rethinking},  which consistently yields improvements across various approaches and often relies on pre-trained models trained by abundance data.  
Specifically, the distribution of latent features from the pre-trained model is used as the alignment anchor to minimize domain gaps, providing stable and reliable alignment guidance.  
However, promising pre-trained models for medical tasks, especially brain tumor segmentation tasks, are often unavailable due to a lack of data.

To tackle this issue, MedMAP includes a pre-defined distribution, $P_{mix}$, as the substitution of the pre-trained model. 
Especially,
MedMAP incorporates a feature-encoding pipeline
$\mathcal{T}$ to map the latent features of different modalities into a shared space, and the latent distributions of these modalities are aligned to the pre-defined $P_{mix}$. 
Theoretically, we show that individually aligning each modality to $P_{mix}$ in MedMAP achieves better performance than mapping them collectively to $P_{mix}$, reducing the modality gaps in a better manner that benefits the downstream prediction tasks in missing-modality scenarios.
Besides, the efficacy of MedMAP with two concrete proposed forms of $P_{mix}$ is empirically validated with different approaches.
Extensive experimental results
demonstrate that
MedMAP can reduce the modality gap and yield substantial benefits in a range of medical tasks where different modalities are absent. 
The major contributions of the paper are summarized as follows:
\begin{itemize}
\item We proposed a pre-defined distribution $P_{mix}$, which is part of the \textbf{M}edical \textbf{M}odality
\textbf{A}lignment \textbf{P}aradigm (MedMAP), as the substitution of the pre-trained model in the missing modality scenario.
\item We provide theoretical support for our MedMAP, showing that individually aligning each modality to $P_{mix}$ certifies tighter Evidence Lower Bound than mapping all modalities as a whole to $P_{mix}$.
\item  We empirically verify the effectiveness of our MedMAP in enhancing brain tumor segmentation performance when using the latest state-of-the-art backbones in scenarios where certain modalities are missing. It substantially mitigates modality gaps and properly learns the cross-modality invariance.
\end{itemize}

\begin{figure}
    \centering
    \includegraphics[width=\linewidth]{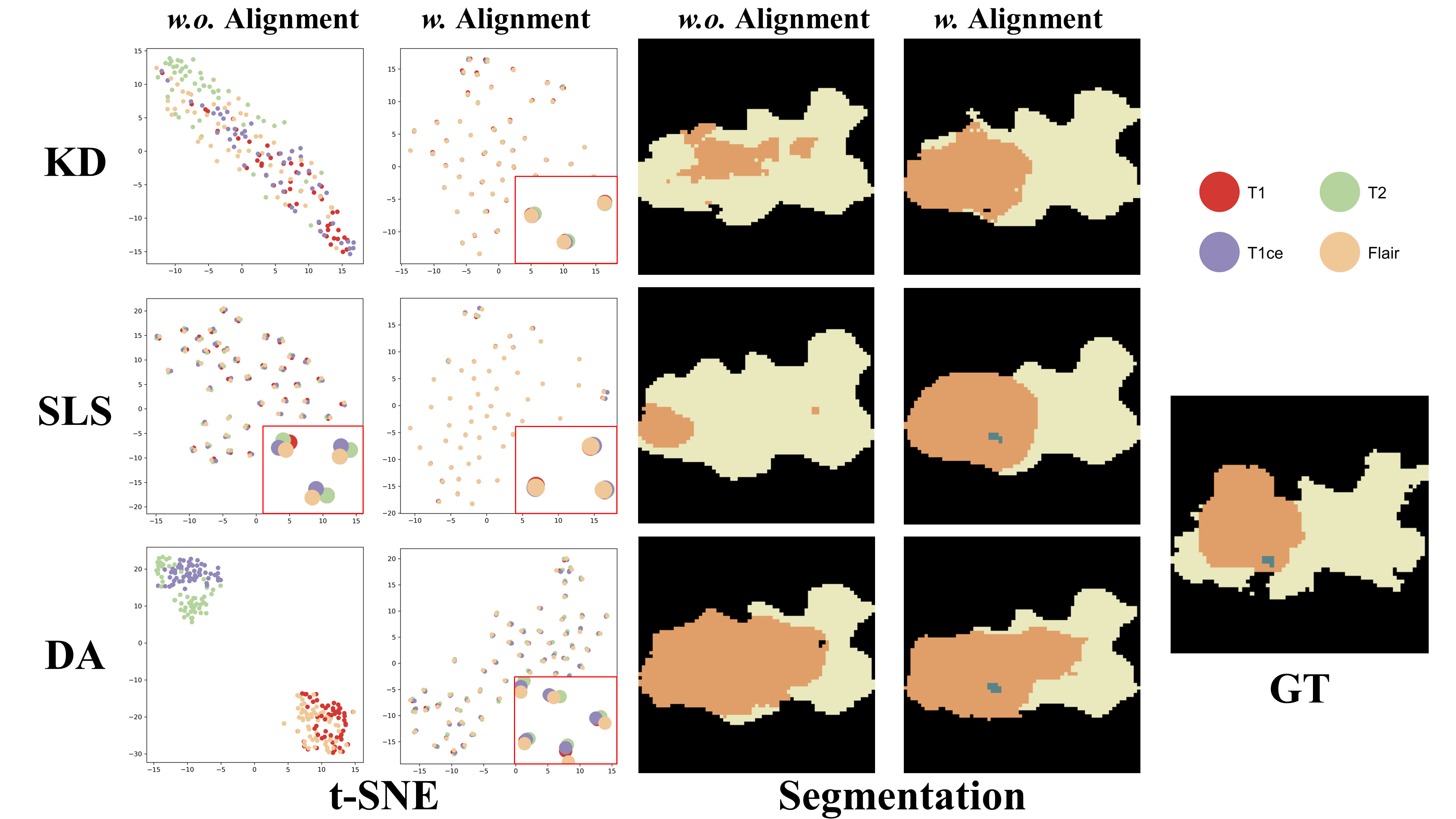}
    \caption{T-SNE and segmentation visualization of different strategies with (\textit{w.}) and without (\textit{w.o.}) alignment. 
    Different colors in t-SNE represent different modalities. GT denotes the groundtruth label.}
    \label{fig:banner_tsne_visual}
\end{figure}

The rest of this article is organized as follows. Section~\ref{sec:review} reviews related works on brain tumor segmentation with incomplete modalities. Section~\ref{sec:theory} provides the theoretical motivations of our alignment paradigm. Section~\ref{methodology} describes the alignment paradigm in detail. We evaluate our alignment paradigm with state-of-the-art methods and provide the ablation studies in Section~\ref{sec:exp}.
The article is concluded in Section~\ref{conclusion}.

\section{Related Work}
\label{sec:review}
\subsection{Multi-modal Learning for Missing Modalities}
\label{sec:reviewscope} 
Flair, T1, T2, and T1ce are complementary modalities utilized to segment brain tumors~\cite{zhao2022modality}.
Segmentation performance unfortunately drops drastically in the scenario when some modalities are missing due to practical difficulties such as data corruption and/or scanning protocol variations~\cite{chen2023query,qiu2023scratch,qiu2023modal,liu2021incomplete,ma2022multimodal}.
Prior arts have proposed several methods to deal with the missing modality problem, 
which can be generally divided into three categories: 
Knowledge Distillation, Share Latent Space, and Domain Adaption.

\textbf{Knowledge Distillation (KD)} based approaches
transfer knowledge from teachers with complete modality information to students with missing modality information.
In~\cite{chen2021learning}, Kullback-Leibler (KL) loss and an additional contrastive loss are engaged to guide the student to imitate soft distributed features from the teacher and reduce latent space divergence between them respectively.
ProtoKD~\cite{wang2023prototype} employs a prototype knowledge distillation loss to encourage simultaneous intra-class concentration and inter-class divergence.
In these methods, students' performance is always based on teachers' performance.
However, students often lead to sub-optimal segmentation performance since their teachers do not consider the feature invariant information and domain-specific information among those modalities. 

\textbf{Shared Latent Space (SLS)} methods retrieve missing information by exploiting the multimodal latent feature space.
RFnet~\cite{ding2021rfnet} uses different encoders to extract the modality-specific information and a decoder to share the weights for those modalities to build a shared representation. Different modality features are fused at different levels.
MmFormer~\cite{zhang2022mmformer} introduces Transfomers to exploit intra- and inter-modality dependency for feature fusion.
These methods presume that there are no modality gaps among these modality features and fuse them directly.  They consider modality-invariant information yet ignore modality-specific information.

\textbf{Domain Adaptation (DA)} based methods aim to minimize the gap between complete modality models and incomplete models as they share different domains.
Adversarial Co-training Network
(ACN)~\cite{wang2021acn} consists of a
multimodal path (to obtain rich modality information) and a unimodal path (to generate modality-specific feature representations). Then, a co-training approach is introduced to establish a coupled learning process between them.
However, although ACN narrows the gaps between complete and incomplete modality models, it still ignores the gaps between different modalities within complete modality models.


{This paper reveals that all the aforementioned methods are influenced by the knowledge acquired by the base model, and mitigating the modality gaps in the base model benefits all of them. 
To address this, we uniformly summarize the mathematical forms of these methods in Section~\ref{sec:theory_previous} to validate this issue.}

\subsection{Alignment in Multi-domain Generalization}
Gaps in the latent space arise not only across various modalities but also span across distinct data domains. 
 To address the domain generalization task, many attempts have been made towards a narrowed gap.
For instance, recent efforts~\cite{ganin2016domain,li2018deep, li2018domain,hu2020domain} learn domain-independent representations by reducing the gaps to improve generalization to classify images, which are, however, not directly applicable to semantic segmentation. 
In our research, we borrow the idea from the domain generalization 
to perform proper alignment between latent distributions in the \textit{teacher}. 
To this end, we successfully reduce the modality gaps to overcome the difficulty of missing modality, thereby improving the segmentation of the medical image in all categories.

\section{Theoretical Motivation}
\label{sec:theory}




In this section, we underscore the significance of reducing modality gaps that prior methodologies have overlooked. We present the problem formulations for three distinct types of approaches and illustrate the enhancement of domain alignment through the strategic use of $P_{mix}$. Furthermore, we explore different techniques for deriving practical empirical representations of $P_{mix}$.


\textbf{Notations.}
We denote the model, i.e., the teacher model for knowledge distillation, the model for shared latent space, or the model pretrained for domain adaptation as the base model.
{Considering $J$ as the number of the  set} of modalities of medical images with paired observations and targets $\{\rmX_j,\rmY_j\}_{j=1}^J$ from the modality $j$. Note for medical modalities, $\rmY$ remains static for all modalities. 
The encoders of the base model are denoted as 
$\mathcal{T}: \mathcal{T}(\rmX_j) \to \rmZ_j^*$ {where $\rmZ^*_j$ denotes the latent feature obtained from the base model of the $j^{th}$ modality.}
Simultaneously, a predictor $\mathcal{C}$ that predicts segmentation masks from $\{\rmZ_j^*\}_{j=1}^J$ as $\mathcal{C}^*:\mathcal{C}^*(\{\rmZ_j^*\}_{j=1}^J) \to \rmY$. 
Correspondingly, we denote the possible downstream model, e.g., the student models for knowledge distillation and the adapted model for the domain adaptation,
for the $j^{th}$ target modality as $\mathcal{S}_j:  \mathcal{S}_j(\rmX_{j}) \to \rmZ_{j}$ of each modality with their predictor $\mathcal{C}_{j}:\mathcal{C}(\rmZ_{j}) \to \rmY$.
Let $P(\cdot)$, $\KL(\cdot\Vert\cdot)$, $H(\cdot)$, $H_{c}(\cdot,\cdot)$, $I(\cdot;\cdot)$ denote the probability of a random variable from the distribution, Kullback–Leibler divergence, entropy, cross-entropy, and mutual information respectively.

\subsection{Previous Methods} 
\label{sec:theory_previous}

Despite the various approaches adopted to handle missing modalities, the common strategy involves training a base model $\mathcal{T}$ that retains information from all modalities as its foundation. Therefore, the objective for training $\mathcal{T}$ can be summarized as:
\begin{align}
\label{eq:t_obj_ori}
    & \max_{\mathcal{C}^*} \!\sum\nolimits_{j=1}^J \!\mathbb{E}_{\rmZ_j^*\sim P(\rmZ_j^*)}[\ln P(\rmY \! \mid \! \mathcal{C}^*(\rmZ_j^*))] + \zeta,
\end{align}
where $\zeta$  denotes any possible regularization.
$Eq.~\ref{eq:t_obj_ori}$ can be changed in the form of information entropy as:
\begin{align}      
    &\min_{\mathcal{C}^*} \sum\nolimits_{j=1}^J H_c(P( \mathcal{C}^*(\rmZ_j^*)), P(\rmY))  + \zeta.  
\end{align}
The base model can be adapted for different downstream approaches under the missing modality scenario. We provide details for each type of missing modality approaches as follows:

\textbf{KD.}
In knowledge distillation for medical segmentation with missing modality, the process involves two main steps: training the teacher and student models. As previously mentioned, the teacher model is denoted as the base model $\mathcal{T}$; its objective is in the form of $Eq.~\ref{eq:t_obj_ori}$.
Notably, previous methods~\cite{hu2020knowledge,chen2021learning,wang2023prototype} uses no extra regularization for training $\mathcal{T}$, i.e., $\zeta_{KD}:=0$.
Meanwhile, for one possible student model $\mathcal{S}_j$ encoder and $\mathcal{C}_j$ classifier trained with missing modalities that leverages knowledge from $\mathcal{T}$, its objective is: 
\begin{equation}
\label{eq:kd_obj}
    \begin{split}
    \max_{\mathcal{S}_j,\mathcal{C}_j}& 
   \mathbb{E}_{\rmZ_j\sim P(\rmZ_j)}[\ln P(\rmY \mid \mathcal{C}_j(\rmZ_j))]  \\
   &
   - \KL(P(\mathcal{S}_j(\rmX_j)) \Vert P(\rmZ_j^*)).   
    \end{split}
\end{equation}



\textbf{SLS.} 
For approaches aiming to achieve better-fused representations from a shared latent space of all modalities, the focus is on enhancing the fusion mechanism of $\mathcal{T}$ through additional losses and modifications to model architectures, denoted as $\zeta_{SLS}$. However, it is important to note that the current shared latent space $\zeta_{SLS}$ methods often do not adequately address the mitigation of modality gaps.

\textbf{DA.} 
Domain adaptation methods address modality gaps by continually adapting the base model $\mathcal{T}$ through specific training with missing modality settings. Considering a possible adaptation $\mathcal{S}_j$ encoder and $\mathcal{C}_j$ classifier derived from $\mathcal{T}$ trained with missing modalities, its objective can be simplified as:
\begin{equation}
    \max_{\mathcal{S}_j,\mathcal{C}_j} 
   \mathbb{E}_{\rmZ_j\sim P(\rmZ_j)}[\ln P(\rmY \mid \mathcal{C}_j(\mathcal{S}_j(\rmX_j)))]. 
\end{equation}


\textbf{Limitations.} 
Albeit their impressive results, most existing methods in the mentioned categories share a common limitation: they often disregard the potential modality gap in $\mathcal{T}$, which can hinder the performance of missing modality tasks. 
Consider a real-valued convex loss $L(\theta)$ where $\theta$ denotes any possible model; the empirical risk~\cite{10433697} introduced by the model $\theta$ for methods without considering the modality gap can be written as
\begin{align}
\label{eq:gap_risk}
    \int  L(Z^*_j, Y_j) dP(\theta) +  \mathbb{E}_{i,j=1}^J D_{KL}(P(Z^*_i)\Vert P(Z^*_j)).
\end{align}
This equation indicates that extra risks would be introduced by the possible modality gaps, thus causing performance degradation. 
Therefore, our approach is based on minimizing the gap across modalities in latent space while maintaining the prediction performance of the latent features.

\subsection{Aligning Medical Multi-modalities}

This section explores an innovative strategy designed to bridge the modality gap, specifically for the base model $\mathcal{T}$. We also demonstrate that our approach can improve many methods across all three categories.

 \textbf{Minimizing modality gap improve generalization ability of the $\mathcal{T}$. }
In the context of domain generalization, minimizing the modality gap can be interpreted as reducing the distributional discrepancy between the source domain $S$ and the target domain $T$ as~\cite{ben2010theory}:
\begin{align}
    \epsilon_T(h) \leq \epsilon_S(h) + d_{H\Delta H}(S, T) + \lambda
\end{align}
where \(\epsilon_T(h)\) is the error rate of the model on the target domain, \(\epsilon_S(h)\) is the error rate of the model on the source domain, \(d_{H\Delta H}(S, T)\) is the distributional discrepancy between the source and target domains, and \(\lambda\) is a term related to the complexity of the hypothesis space.

Reducing the distributional discrepancy between the source and target domains, thereby reducing \(d_{H\Delta H}(S, T)\), which theoretically lowers the error rate on the target domain and thus improves the model's generalization ability. In the modality generalization tasks,  reducing \(d_{H\Delta H}\) of different modalities can also improves the model's generalization ability.

\textbf{How to align to the pre-defined anchor $P_{mix}$. }
Assuming that there is a feasible $P_{mix}$ in the latent space which preserves the prediction performance, i.e., the minimization of $ \int  L(Z^*_j, Y_j) dP(\theta)$ in $Eq.~\ref{eq:gap_risk}$  is guaranteed, 
minimization of $Eq.~\ref{eq:gap_risk}$ 
can be changed as:
\begin{equation}
   \min
   \KL(P( \rmZ_j^*) \Vert P_{mix}) \!
    + \! H_c( P( \mathcal{C}^*(\rmZ_j^*)); P(\rmY)). 
\end{equation}

To derive an empirical form of minimizing the term $\KL(P( \rmZ_j^*) \Vert P_{mix})$, we first introduce Proposition~\ref{prop:P_mix}:
\begin{proposition}
\label{prop:P_mix}
{For training a multi-modal \textit{teacher} model, it is assumed that the existence of one modality $\rmZ_{i} $ is independent of the other modality $\rmZ_{j}$ where $i \in \{1, ..., J\}, j \in \{1, ..., J\}, i \neq j$. In other words, if one modality is missing or corrupted, other modalities can still exist.}
In this scenario, there exists a probability distribution $P_{mix}$ that can be used as an anchor distribution to align the latent variables $\rmZ^{*}$,
while preserving sufficient information for accurate prediction of the segmentation labels $\rmY$.
\end{proposition}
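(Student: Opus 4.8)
The plan is to prove existence by explicit construction, exhibiting one concrete candidate for $P_{mix}$ and then verifying its two required properties in turn: that it is a \emph{reachable} common anchor for every modality, and that it remains \emph{label-sufficient} for $\rmY$. The natural candidate is the modality mixture $P_{mix}(z)=\sum_{j=1}^{J}\pi_j\,P(\rmZ_j^{*}=z)$ with strictly positive weights $\pi_j$ summing to one (taking $\pi_j=1/J$ for concreteness). This is manifestly a valid probability distribution on the shared latent space, being a convex combination of the per-modality latent distributions produced by $\mathcal{T}$.

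First I would check that $P_{mix}$ is a feasible alignment target for each modality, i.e.\ that the term $\KL(P(\rmZ_j^{*})\Vert P_{mix})$ appearing in the objective is always finite and uniformly bounded. This follows from the pointwise domination $P_{mix}(z)\ge\pi_j\,P(\rmZ_j^{*}=z)$, which gives $\KL(P(\rmZ_j^{*})\Vert P_{mix})\le-\ln\pi_j\le\ln J$ for every $j$. Hence no modality distribution is infinitely far from the anchor, and the single distribution $P_{mix}$ can serve simultaneously as the alignment target for all $J$ modalities.

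Next I would establish label-sufficiency, the substantive part, using the independence and static-label hypotheses. Introduce the augmented pair $(\rmZ,M)$ with modality index $M\sim\pi$ and $\rmZ\mid(M=j)\sim P(\rmZ_j^{*})$, so that the marginal of $\rmZ$ is exactly $P_{mix}$. Since $\rmY$ is static across modalities the label is independent of the index, $I(M;\rmY)=0$, and the two chain-rule expansions of $I(\rmZ,M;\rmY)$ combine to yield $I(\rmZ;\rmY)=\sum_{j}\pi_j\,I(\rmZ_j^{*};\rmY)-I(M;\rmY\mid\rmZ)$. The first term is the average per-modality label information retained by the base model, cf.\ the predictive term $\mathbb{E}[\ln P(\rmY\mid\mathcal{C}^{*}(\rmZ_j^{*}))]$ in Eq.~\ref{eq:t_obj_ori}; the residual $I(M;\rmY\mid\rmZ)\ge0$ measures how much extra label information the modality identity would add once the latent is observed. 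Invoking the robustness premise that $\rmY$ is recoverable from any single present modality, each $\rmZ_j^{*}$ is individually label-sufficient, which forces $I(M;\rmY\mid\rmZ)=0$; hence $I(\rmZ;\rmY)$ attains the full label information, the Bayes-optimal predictor on a sample from $P_{mix}$ is no worse than the per-modality predictors, and the minimization of $\int L(\rmZ_j^{*},\rmY_j)\,dP(\theta)$ in Eq.~\ref{eq:gap_risk} is preserved.

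The hard part will be discharging the residual $I(M;\rmY\mid\rmZ)$, which is exactly the familiar failure mode of marginal-matching alignment: collapsing all modalities onto one distribution can in principle scramble label-relevant structure. The modality-independence hypothesis is the essential lever, preventing the label signal of one modality from being cancelled by the others in the mixture, but turning ``individually informative'' into the exact identity $I(M;\rmY\mid\rmZ)=0$ requires genuine per-modality sufficiency; with only approximate sufficiency I would instead carry $I(M;\rmY\mid\rmZ)$ as a controllable residual and state the conclusion as approximate ($\epsilon$-)sufficiency. A secondary point is to make ``accurate prediction'' precise, which I would phrase as preservation of $I(\rmZ^{*};\rmY)$, equivalently of the attainable cross-entropy $H_c$, matching the objectives in Eq.~\ref{eq:t_obj_ori} and Eq.~\ref{eq:kd_obj}.
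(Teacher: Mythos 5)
Your proposal is essentially correct in structure but takes a genuinely different route from the paper. The paper's own proof never constructs $P_{mix}$ at all: it \emph{assumes} a sufficiency-preserving $P_{mix}$ exists, invokes the joint-versus-marginal mutual-information inequality (Eq.~\ref{eq:align}) as a single-letterization step, and concludes that aligning each modality individually to $P_{mix}$ is a lower bound of aligning the pooled representation --- which is then repackaged as the tighter-ELBO claim Eq.~\ref{eq:why_pmix1} $\le$ Eq.~\ref{eq:why_pmix2}. You instead prove the existence claim constructively: you exhibit the mixture $\sum_{j}\pi_j P(\rmZ_j^*)$ --- which is precisely the form the paper only later adopts empirically as $P^*_{mix}\triangleq\sum_j w_j P(\rmZ_j)$ --- verify feasibility via the uniform bound $\KL(P(\rmZ_j^*)\Vert P_{mix})\le\ln J$, and argue label-sufficiency by the chain rule on the augmented pair $(\rmZ,M)$. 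What your route buys is an actual existence argument with a quantitative feasibility bound, neither of which the paper supplies; what the paper's route buys is the inequality that powers its downstream "individual alignment is tighter" claim, which your proposal does not address at all. In that sense your proof complements rather than reproduces the paper's reasoning.

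There is one genuine gap, which you flag but misdiagnose. Per-modality sufficiency alone does \emph{not} force $I(M;\rmY\mid\rmZ)=0$: if the same latent value $z$ were mapped to label A by modality $1$ and label B by modality $2$, each $\rmZ_j^*$ would be perfectly sufficient, yet conditioning on $M$ given $\rmZ=z$ would still resolve the label, making the residual strictly positive. The issue is not approximate versus exact sufficiency, as your closing paragraph suggests, but cross-modality \emph{consistency} of the latent-to-label map. In this paper's setting the repair is available: sufficiency is realized through the single shared predictor $\mathcal{C}^*$ acting on all modalities (cf.\ Eq.~\ref{eq:t_obj_ori}), so if $P(\rmY\mid\mathcal{C}^*(z))$ is accurate for every modality then the label map depends on $z$ alone, the pathology above is excluded, and $I(M;\rmY\mid\rmZ)=0$ follows. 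You should invoke the shared classifier explicitly at that step; with that one addition your construction is a complete and substantially more rigorous proof of Proposition~\ref{prop:P_mix} than the one the paper gives.
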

\begin{proof}
The modality existence independent assumption is derived from the fact that each modality is independent of each other.  
If  $P_{mix}$ preserves sufficient information for accurate prediction of the segmentation labels $\rmY$,
based on the joint and marginal mutual information, we have 
\begin{align}
\label{eq:align}
    & \sum\nolimits_{j=1}^{J} \!I(P_{mix}(\rmZ_j^*); P(\rmZ_j^*))  
    \!\le \!I(P_{mix}(\rmZ^*); P(\rmZ^*)).
\end{align}
$Eq.~\ref{eq:align}$ shows that individually mapping each modality $\rmZ_j^*$ to $P_{mix}$ is a lower bound of mapping all modalities together to $P_{mix}$.
\end{proof}

Proposition~\ref{prop:P_mix} is \textbf{single-letterization} that simplifies the optimization problem over a large-dimensional (i.e., multi-letter) problem. 
Therefore, we individually align representations of each modality to the anchor $P_{mix}$ rather than the whole distribution of all representations from all modalities.
Furthermore, 
the former term of $Eq.~\ref{eq:align}$ can be derived as:
\begin{equation}
\label{eq:why_pmix1}
    \sum_{j=1}^{J} 
    \mathbb{E}_{\rmZ_j^*\sim P(\rmZ_j^*)}
    [\ln P(\rmY \! \mid \!\mathcal{C}^*(\rmZ_j^*)) -  \KL(P( \rmZ_j^*) \Vert P_{mix}) ],  
\end{equation}
while the latter term is reformed as:
\begin{equation}
\label{eq:why_pmix2}
    \mathbb{E}_{\rmZ^*\sim P(\rmZ^*)}
    [\ln P(\rmY \! \mid \!\mathcal{C}^*(\rmZ^*)) \! \! - \!\! \KL(P( \rmZ^*) \Vert P_{mix}) ]. 
\end{equation}
{$Eq.~\ref{eq:why_pmix1}$ presents the Evidence Lower Bound (ELBO) and we have $Eq.~\ref{eq:why_pmix1}\leq Eq.~\ref{eq:why_pmix2}$ which indicates that $Eq.~\ref{eq:why_pmix1}$ is tighter than the latter.}
Thus, instead of 
minimizing the gap between all modalities and $P_{mix}$, the alternative objective for 
$\mathcal{T}$ is derived from $Eq.~\ref{eq:why_pmix1}$ as:
\begin{equation}
 \label{eq:t_obj}
   \min
   \sum_{j=1}^J [\KL(P( \rmZ_j^*) \Vert P_{mix}) \!
   + \! H_c( P( \mathcal{C}^*(\rmZ_j^*)); P(\rmY))]. 
\end{equation}


As shown in $Eq.~\ref{eq:t_obj}$,
the essential point is finding a feasible $P_{mix}$ that anchors all latent features in the space while preserving the prediction ability from the latent features to targets for all downstream adaptations.  
We introduce possible forms of $P_{mix}$ as follows.


\textbf{Possible approximations of pre-defined anchor $P_{mix}$.}
\label{approximations}
{The selection of $P_{mix}$ is critical for MedMAP and may vary across different base model backbones. Thus, instead of manual selection, we empirically determine that the optimal $P_{mix}$ is a weighted mixture of all modalities' latent feature distributions. Specifically,  $P_{mix}^* \triangleq \sum\nolimits_{j=1}^J w_j P(\rmZ_j)$ where $w_j$ is the associated weight of each modality. 
To validate $P_{mix}^*$, we compare it to two other possible empirical forms of $P_{mix}$:
1) Intuitive selection $P_{mix}^{k}$ where a latent distribution of one modality is selected from $\{P(Z^*_j)\}_{j=1}^J$ (i.e., $P_{mix}^{k} \triangleq P(Z^*_{j=k})$ where $k\in J$).
2) Assume that $P_{mix}$ follows a normal distribution denoted as $P_{N}$.
Extensive experiments in Section~\ref{sec:exp} demonstrate that using $P_{mix}^*$ consistently yields improvements across different base model backbones under various missing modality settings. validating $P_{mix}^*$'s superiority in comparison to $P_{mix}^{k} $ and $P_{N}$.}

\section{Methodology}
\label{methodology}
\subsection{Alignment Paradigm}
\label{sec:ap}
MedMAP consists of a feature encoding pipeline and an anchor that latent space of the features will be aligned to.
The encoder is a simple convolution that aims to map all the modality features into a shared latent space, so that alignment can be applied. With the given $P_{mix}$, we align each modality latent features $\rmZ^*_{j, j\in\{1,..,J\}}$ to $P_{mix}$.
By converting multi-modal features into a pre-defined distribution, we aim to standardize the scale and distributional properties of data originating from heterogeneous sources, thereby enabling cross-modal learning of distinctive features.
In this article, two kinds of $P_{mix}$ are proposed: $P^{k}_{mix}$ and $P^{*}_{mix}$.
We denote $B$, $J$, the batch size, and the number of modalities, respectively.

\subsubsection{Aligning to $P^{k}_{mix}$}
To derive $P^{k}_{mix}$, we first train the backbones using each modality independently and then evaluate the trained models over all modalities. We select the modality designated as the $k^{th}$
modality, which is identified when the models demonstrate the superior performance when evaluated across $j$ modalities.
Following this, we define $P_{mix}^{k} \triangleq P(\rmZ^*_{j=k})$. 
For aligning other modalities to the selected $k^{th}$ modality,
we minimize:
\begin{align}
\mathcal{L}(\mathcal{T}) = \frac{1}{BJ} \sum\nolimits_{b=1}^B \sum\nolimits_{j=1}^J \left\|(z_j - z_k)\right\|^2,
\end{align}
where $z$ are features of different modalities, $z_j\sim\rmZ_j$ are features that need to be aligned to $x_k\sim\rmZ_k$ from the  $k^{th}$ modality; 

\subsubsection{Aligning to $P^{*}_{mix}$}
\label{sec:pxin}
In the quest to derive a more conducive latent space for integrating all modalities, we have advanced an innovative methodology termed Adaptive Alignment. 
This approach will transcend the basic alignment method that confines the latent space to a specific modality ($P^{k}_{mix}$).
Adaptive Alignment operates under the presumption that an optimal latent space, termed as $P^{*}_{mix}$, for a prior modality can serve as a foundational anchor. Following this, we define $P_{mix}^*\triangleq\sum\nolimits_{j=1}^J w_j P(\rmZ_j)$, where $w_j\in w =\{w\}_{i=i}^J$ are learable weights.
Then we minimize:
\begin{equation}
      \mathcal{L}(\mathcal{T}, w) = \frac{1}{BJ} \sum_{b=1}^B\sum_{j=1}^J  \left\|z_j - \sum\nolimits_{j=1}^J  w_j z_j\right\|^2,
\end{equation}
where 
$x_j\sim\rmZ_j$ are features that need to be aligned, 
and $\sum\nolimits_{j=1}^J  w_j z_j$ is the adaptive anchor learned from all the features. 

Unlike $P^{k}_{mix}$, $P^{*}_{mix}$ does not rely on one specific pre-defined distribution. Such characteristic enables $P^{*}_{mix}$ to more effectively adapt to various multi-modality datasets, requiring less prior knowledge.

\begin{figure}
    \centering
\begin{subfigure}[b]{0.45\textwidth}
    \includegraphics[width=\textwidth]{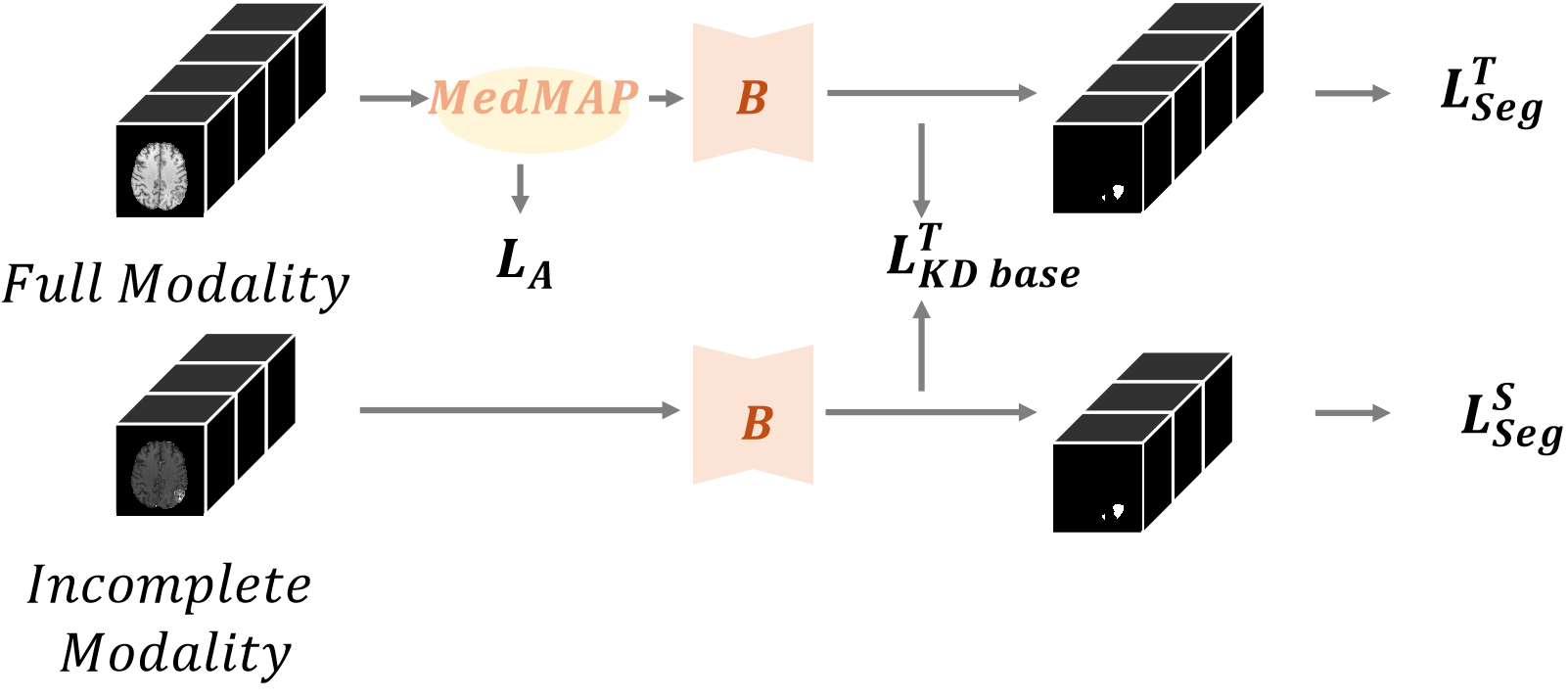}
    \caption{Knowledge Distillation (KD)}
    \label{sls_kd}
\end{subfigure}
\hfill
\hfill
\begin{subfigure}[b]{0.45\textwidth}
    \includegraphics[width=\textwidth]{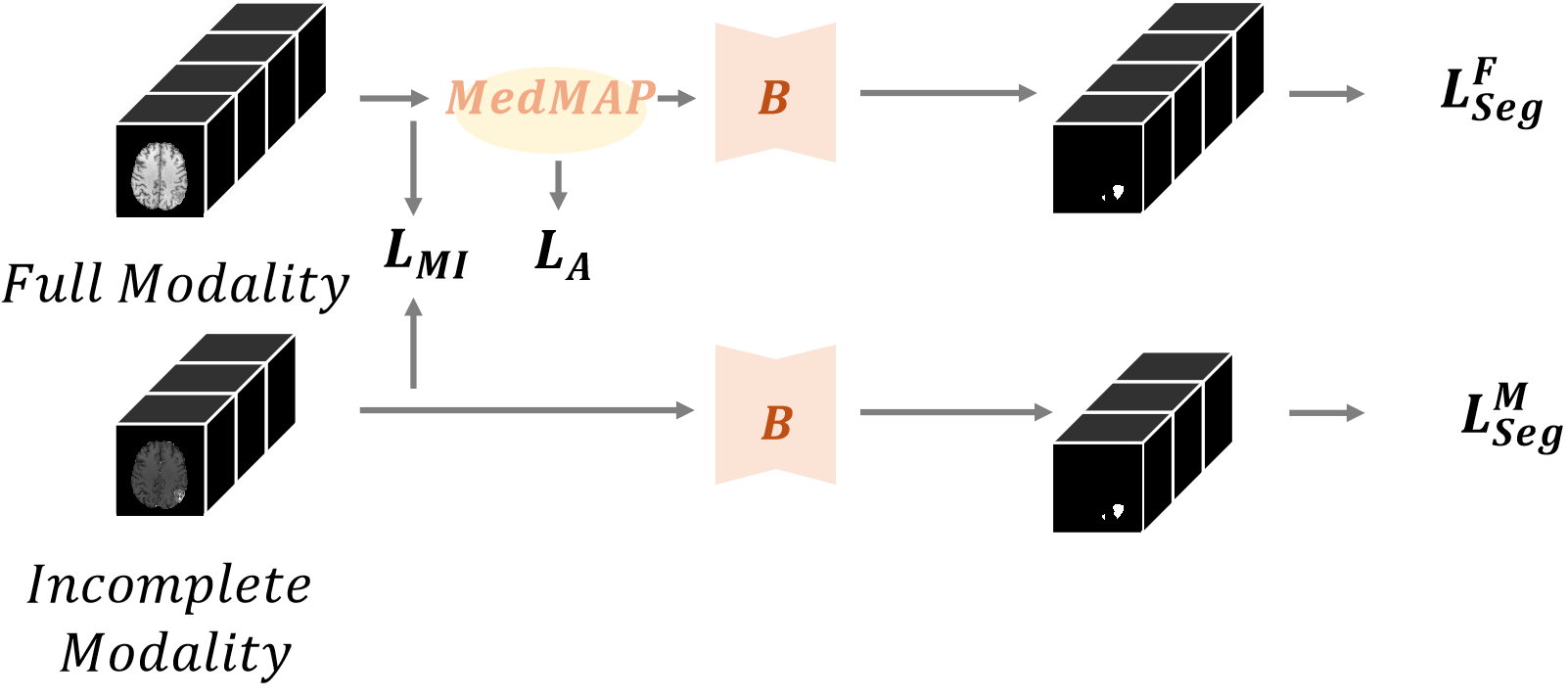}
    \caption{Domain Adaption (DA)}
    \label{sls_da}
\end{subfigure}
\hfill
\begin{subfigure}[b]{0.45\textwidth}
    \includegraphics[width=\textwidth]{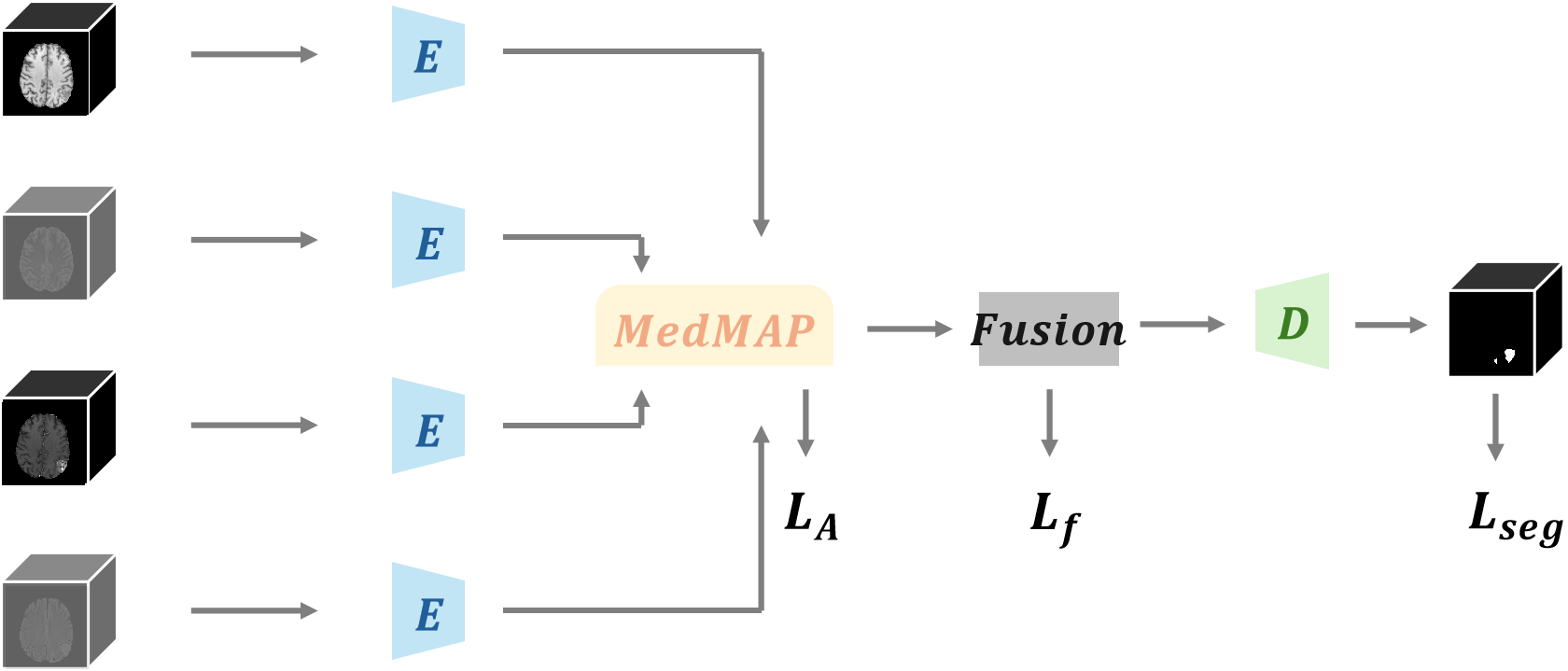}
    \caption{Shared Latent Space (SLS)}
    \label{sls_are}
\end{subfigure}
    \label{fig:baseline_are}
     \caption{Baseline architectures: a) KD, b) DA, c) SLS. \textit{B} is the baseline model structure which represents 3D U-Net according to the baselines. \textit{E}, fusion, and \textit{D} are encoder, fusion, and decoder modules that compose SLS baseline architecture.
     }
\end{figure}

\subsection{Backbones}
\label{sec:backbones}
\subsubsection{KD}
As shown in Fig.~\ref{sls_kd}, in the KD methods, there are two branches: the teacher, and the student. Teacher models are trained with a full-modality dataset. Student models are trained with an incomplete-modality dataset and soft labels from the teacher model. 

For training the teacher, each modality is encoded into a shared latent space and then aligned to the pre-defined $P_{mix}$. Specifically, as shown in $Eq.~\ref{eq:align}$, each modality representation is individually aligned to $P_{mix}$. The aligned features then become the input to the original teacher backbone.
Therefore, the objective of training the teacher will be:
\begin{equation}
    \mathcal{L}_{\text{KD base}}=  \mathcal{L}_{seg}^\mathcal{
    T} +  \alpha\mathcal{L}_{\mathcal{A}},
\end{equation}
where $\mathcal{L}_{seg}^\mathcal{T}$ is the segmentation loss of teacher and $\mathcal{L}_{\mathcal{A}}$ is the alignment loss. $\alpha$ is a parameter which is empirically set to 0.125 in this work.
Given that students harness the distilled knowledge from their teachers during training, their performance serves as a valuable indicator of teacher effectiveness. Therefore, we employ the student performance as a means to validate the feasibility of MedMAP for KD.

Specifically, the teacher model remains frozen, which preserves the integrity of the pre-learned representations and enables us to concentrate on assessing how effectively the student model can emulate the fixed knowledge rather than updating the teachers during student training. 
Consequently, the learning process of the student model is supported by the ground truth and the teacher’s knowledge with the overall objective $\mathcal{L}$ as:
\begin{equation}
    \mathcal{L}_{KD}=  \mathcal{L}_{seg}^\mathcal{
    S} +  \mathcal{L}_{\text{KD base}}^{\mathcal{T}},
\end{equation}
where $\mathcal{L}_{seg}^\mathcal{S}$ is the hybrid segmentation loss supervised by the hard labels of the student, and $\mathcal{L}^{\mathcal{T}}$ denotes the loss that receives supervision from the teacher.

\subsubsection{SLS}
As shown in Fig.~\ref{sls_are}, SLS models always have a modality-specific encoder, a modality fusion encoder, and a decoder~\cite{zhang2022mmformer}. In the modality-specific encoder,  information within a specific modality will be extracted. The modality-fusion encoder will be used to create modality-invariant features with global semantics corresponding to the tumor region. A decoder is used to recover the information. 
We will align the well-extracted feature before the modality-specific encoder. 
Therefore, the total objective of SLS will be: 
\begin{equation}
    \mathcal{L}_{SLS}=  \mathcal{L}_{seg} +  \mathcal{L}_{f} + \alpha\mathcal{L}_{\mathcal{A}},
\end{equation}
where $\mathcal{L}_{seg}$ is the segmentation loss, $\mathcal{L}_{f}$ is the possible fusion loss which has been proposed by the backbones; $\mathcal{L}_{\mathcal{A}}$ is the alignment loss, and $\alpha$ is a parameter which is empirically set to 0.125 in this paper.

\subsubsection{DA}
As shown in Fig.~\ref{sls_da}, similar to KD, DA also has two branches: the full modality branch and the incomplete modality branch. It distills semantic knowledge by minimizing the Kullback-Leibler (KL) between the two branches.
Meanwhile, besides the KL loss, it considers the domain gaps between the full modality model and the incomplete modality model. In ACN~\cite{wang2021acn}, it optimizes the hierarchical mutual information by minimizing the mutual information loss ($\mathcal{L}_{\mathcal{MI}}$) between these two paths.
We argue that we need not only consider the domain gap between the two branches, but also emphasize the domain gap among different modalities. Therefore, we add the MedMAP at the same place with  $\mathcal{L}_{\mathcal{MI}}$. 
It is noted that different from KD, DA is a co-training network. The full-modality model will not be frozen when training. Therefore, the whole objective will be:
\begin{equation}
    \mathcal{L}_{DA}=  \mathcal{L}_{seg}^\mathcal{
    F} +  \mathcal{L}_{seg}^\mathcal{
    M} + \mathcal{L}_{\mathcal{MI}} +   \alpha\mathcal{L}_{\mathcal{A}},
\end{equation}
where $\mathcal{L}_{seg}^\mathcal{
    F}$ and $\mathcal{L}_{seg}^\mathcal{
    M}$ are the segmentation loss of the full modality model and incomplete model respectively and $\mathcal{L}_{\mathcal{A}}$ is the alignment loss.
    $\alpha$ is a parameter which is also set to 0.125 empirically in this paper.

\section{Experiment Configurations}
\label{sec:exp}

\begin{table}[bp]
\caption{Comparison of average Dice (\%) on BraTS2018 and BraTS2020. w.o. is without, w. is with, $\Delta$ is an improvement.}
\centering
\resizebox{\linewidth}{!}{
\begin{tabular}{c|c|ccc}
\toprule
Dataset & Models      & PMKL~\cite{chen2021learning} (KD)   & mmFormer~\cite{zhang2022mmformer} (SLS) & ACN (DA)~\cite{wang2021acn} \\
\bottomrule
\bottomrule
\multirow{2}{*}{BraTS2018} & w.o. MedMAP     &  65.44      &   70.62       &    70.54 \\
& w. MedMAP ($\Delta$) & 69.12  (\red{3.68} \textcolor{red}{$\uparrow$})       &     78.29 (\red{7.67} \textcolor{red}{$\uparrow$})    &   72.84 (\red{2.30} \textcolor{red}{$\uparrow$})\\
\bottomrule
\multirow{2}{*}{BraTS2020} & w.o. MedMAP    &   72.44   &          72.94       &   74.38  \\
& w. MedMAP ($\Delta$) &   73.75 (\red{1.31} \textcolor{red}{$\uparrow$})   &     74.76 (\red{6.76} \textcolor{red}{$\uparrow$}) &     76.41 (\red{0.38} \textcolor{red}{$\uparrow$})      \\
\bottomrule
\end{tabular}
}
\label{tab:avergaImp}
\end{table}

\subsection{Datasets}
The BraTS datasets in 2018 and 2020~\cite{menze2014multimodal,bakas2017advancing}, consisting of 285 and 369 subjects respectively, are employed for evaluation. 
Four individual modalities including T1, T1ce, T2, and Flair construct a specific subject, displaying brain tumor sub-regions \textit{i.e.} enhancing tumor (ET), peritumoral edema (ED), and the necrotic and non-enhancing tumor core (NCR/NET).
Specifically, each modality captures different properties of brain tumor sub-regions: GD-enhancing tumor (ET), peritumoral edema (ED), and the necrotic and non-enhancing tumor core (NCR/NET), nesting into three key segmentation targets, \textit{i.e.}, whole tumor (WT, ET+ED+NCR/NET), tumor core (TC, ET+NCR/NET), and ET. 
For both datasets, we use Dice Score ($\%$)~\cite{dice1945measures} to evaluate the performance:  higher Dice indicates better segmentation performance. 

\subsection{Implementation}
For comparison fairness, we implant the proposed alignment paradigm to retrain each employed backbones~\cite{chen2021learning,zhang2022mmformer,wang2021acn} following identical respective experimental settings except the dataset configuration and the batch size, given by each author's released codes.
The BraTS 2018 dataset is split into training, validation, and testing sets following Proto-KD~\cite{wang2023prototype}, while the other BraTS 2020 dataset is used with three-fold cross-validations. 
In contrast to its official version, the batch size of mmFormer~\cite{zhang2022mmformer} is set to 1 and the batch size of PMKL~\cite{chen2021learning} is set to 4 while the others remain the same. Models are implemented on one Nvidia GeForce RTX 3090Ti GPU with Pytorch.


\begin{table*}[]
\caption{Comparison of dice scores when different modalities are missing on BraTS2018 and BraTS2020. $\bullet$ represents present modalities. $\circ$ displays missing modalities, while N denotes the number of them.
Statistics in rows with + display achieved performance of the proposed paradigm, while those in rows of $\Delta$ evaluate performance difference brought by the proposed MedMAP paradigms.
Statistics in columns titled by Avg. present average dice scores, while those in Total Avg. columns display average dice scores across different missing modality scenarios.}
\centering
\tiny
\resizebox{\linewidth}{!}{%
\begin{tabular}{c|c|cccc|c|c|c|cccc|c|c|c}
\toprule
\multicolumn{2}{c}{} &\multicolumn{7}{|c}{\textbf{BraTS2018}}&\multicolumn{7}{|c}{\textbf{BraTS2020}}\\
\midrule
\multirow{4}{*}{Type} & Flair& $\circ$& $\circ$& $\circ$& $\bullet$& \multirow{4}{*}{\makecell[c]{Avg. \\ N=2}}& \multirow{4}{*}{\makecell[c]{Avg. \\ N=1}}& \multirow{4}{*}{\makecell[c]{Total \\ Avg.}} & $\circ$&$\circ$&$\circ$&$\bullet$&\multirow{4}{*}{\makecell[c]{Avg. \\ N=2}}&\multirow{4}{*}{\makecell[c]{Avg. \\ N=1}}& \multirow{4}{*}{\makecell[c]{Total \\ Avg.}} \\
& T1& $\circ$& $\circ$& $\bullet$&$\circ$& & & & $\circ$&$\circ$&$\bullet$&$\circ$&&& \\
& T1ce& $\circ$& $\bullet$& $\circ$&$\circ$& & & & $\circ$&$\bullet$&$\circ$&$\circ$&&& \\
& T2 &$\bullet$& $\circ$& $\circ$&$\circ$& & & & $\bullet$&$\circ$&$\circ$&$\circ$&&&\\
\midrule
\multirow{9}{*}{WT}& PMKL&{81.00}&70.50&73.31&84.11&76.72&78.10&77.32&82.79&75.76&76.92&86.19&84.17&87.67&83.75\\
& +
MedMAP&83.77&75.97&72.04&85.70&81.35&83.85&\textbf{81.06}&84.23&77.03&76.85&86.79&85.44&88.06&\textbf{84.74}\\
& $\Delta$&\red{2.77} & \red{5.47} & \blue{-1.27} & \red{1.59} &\red{4.63} &\red{5.75}& \cellcolor{mygray}\red{3.73}&\red{1.44} &\red{1.27} & \blue{-0.07}& \red{0.60}&\red{1.27}&\red{0.89}&\cellcolor{mygray}\red{0.99}\\
\cline{2-16}
& mmFormer &84.09&72.85&73.37&85.60&85.26&87.81&83.67&85.78&76.46&76.35&87.80&87.81&89.15&85.79\\
& + MedMAP &85.32 & 85.50 & 85.71 & 84.51 &86.61 &86.97&\textbf{86.26}&87.46&87.74&87.51& 87.77& 87.86&87.96&\textbf{87.79}\\
&$\Delta$& \red{1.23} & \red{12.65} & \red{12.34} & \blue{-1.09} &\red{1.35} & \blue{-0.83}& \cellcolor{mygray}\red{2.59}& \red{1.68}& \red{11.28}& \red{11.16}& \blue{-0.03}&\red{0.07}&\blue{-1.79}&\cellcolor{mygray}\red{2.00}\\
\cline{2-16}
& ACN&84.18&77.31&73.63&84.77&82.68&85.04&82.38&86.94&78.15&78.58&87.33&86.99&88.98&85.74\\
& + MedMAP&86.68&78.09&79.31&87.66&83.98&85.25&\textbf{83.95}&86.57&78.91&78.56&87.54&86.98&88.77&\textbf{85.98}\\
&$\Delta$& \red{2.50} & \red{0.78} & \red{5.68} & \red{2.89} &\red{1.30} &\red{0.21}& \cellcolor{mygray}\red{1.57}& \blue{-0.37}& \red{0.76}& \blue{-0.02}& \red{0.21}&\blue{-0.01}&\blue{-0.21}&\cellcolor{mygray}\red{0.24}\\
\midrule
\multirow{9}{*}{TC}&PMKL&67.92&76.92&64.26&62.21&66.00&68.96&66.60&71.50&82.37&69.39&67.10&75.27&79.49&74.85\\
&+ MedMAP&69.91&80.35&68.39&68.44&68.14&74.24&\textbf{69.83}&71.78&82.48&71.63&69.98&77.05&81.05&\textbf{76.13}\\
&$\Delta$& \red{1.99}&\red{3.43} &\red{4.13} & \red{6.23} & \red{2.14}&\red{5.28}& \cellcolor{mygray}\red{3.23}&\red{0.38} & \red{0.11}& \red{2.24}& \red{1.22}&\red{1.78}&\red{1.56}&\cellcolor{mygray}\red{1.18}\\
\cline{2-16}
& mmFormer &67.80&77.32&64.56&64.08&75.01&78.99&73.04&71.03&81.06&68.77&72.14&79.12&82.69&73.23\\
& + MedMAP & 77.76 & 77.34 & 76.63 & 76.29 &78.46 &78.72&\textbf{78.02}&82.35&81.92& 71.49&81.99&82.33&82.89&\textbf{76.22}\\
&$\Delta$& \red{9.96}& \red{0.02}& \red{12.07}& \red{12.21}&\red{3.44} &\blue{-0.27}& \cellcolor{mygray}\red{4.97}& \red{11.32}& \red{0.86}& \red{2.72}& \red{9.85}&\red{3.21}&\red{0.20}&\cellcolor{mygray}\red{2.99}\\
\cline{2-16}
& ACN&72.08&66.17&59.74&73.26&71.96&71.17&72.57&75.21&71.95&67.21&76.43&77.34&80.82&76.64\\
& + MedMAP &74.41&70.42&67.13&76.74&73.79&76.90&\textbf{73.49}&75.76&70.46&67.26&76.92&76.88&81.03&\textbf{76.96}\\
&$\Delta$& \red{2.33}& \red{4.25}& \red{7.39}& \red{3.48}& \red{1.83}&\red{0.15}& \cellcolor{mygray}\red{2.32}&\red{0.55}& \blue{-0.49}& \red{0.05}& \red{0.49}&\blue{-0.46}&\red{0.22}&\cellcolor{mygray}\red{0.32}\\
\midrule
\multirow{9}{*}{ET}&PMKL&47.09&75.54&41.37&41.35&52.52&60.70&52.39&44.37&74.19&44.52&44.57&61.74&70.91&58.73\\
&+ MedMAP&45.17&76.44&47.66&43.57&56.86&69.27&\textbf{56.47}&48.17&76.64&44.82&49.02&62.68&70.44&\textbf{60.38}\\
&$\Delta$& \blue{-1.92}& \red{0.90}& \red{6.29}& \red{2.22}& \red{4.34}&\red{8.57}& \cellcolor{mygray}\red{4.09}&\red{3.80}&\red{2.45} & \red{0.30}& \red{4.45}&\red{0.94}&\blue{-0.47}&\cellcolor{mygray}\red{1.65}\\
\cline{2-16}
& mmFormer &40.08&72.19&38.89&37.23&58.05&68.44&55.15&46.30&75.42&43.80&44.26&63.04&71.46&59.81\\
&+ MedMAP&69.02&70.55&70.77&70.23&70.72&70.93&\textbf{70.58}&75.00&74.83&74.67&74.30&75.59&75.60&\textbf{75.10}\\
&$\Delta$& \red{28.94}& \blue{-1.64}& \red{31.88}& \red{33.00}&\red{15.43}&\red{2.49}& \cellcolor{mygray}\red{11.72}&\red{28.70}& \blue{-0.59}& \red{30.87}& \red{30.04}&\red{12.55}&\red{4.14}&\cellcolor{mygray}\red{15.29}\\
\cline{2-16}
& ACN&47.79&75.60&43.25&42.21&58.79&66.55&56.66&49.27&77.55&46.65&46.70&63.38&71.19&60.77\\
& $+$ MedMAP &47.93&78.03&50.64&46.61&65.17&67.07&\textbf{61.09}&48.91&78.27&48.34&47.94&64.79&71.80&\textbf{61.35}\\
&$\Delta$& \red{0.14}& \red{2.43}& \red{7.39}& \red{4.40}& \red{6.39}&\red{0.52}& \cellcolor{mygray}\red{4.43}&\blue{-0.26}& \red{0.62}& \red{1.69}& \red{1.24}&\red{1.41}&\red{0.61}&\cellcolor{mygray}\red{0.58}\\
\bottomrule
\end{tabular}%
}
\label{tab:MissingOn18}
\end{table*}

\section{Experiment Results}
As discussed in Sec.~\ref{sec:reviewscope}, recent approaches to predict annotations in the \underline{M}issing \underline{M}odality scenarios (MM) mainly include Knowledge Distillation (\textit{e.g.} PMKL~\cite{chen2021learning}), Shared Latent Space (\textit{e.g.} mmFormer~\cite{zhang2022mmformer}), and Domain Adaption (\textit{e.g.} ACN~\cite{wang2021acn}). 
The proposed Alignment Paradigm will be implanted into the
aforementioned recent SOTAs to evaluate the effectiveness.
Specifically, Section~\ref{sec:QuantitativeComparison} will present statistical improvements given by the proposed MedMAP, measured by averaged dice as well as improvement (marked in red) and decline (in blue) in Table~\ref{tab:avergaImp}. 
Details can also be found across various MM on both datasets in Table~\ref{tab:MissingOn18}.
Sec.~\ref{sec:QualitativeComparison} visualizes examples with and without $P_{mix}$.
In Sec.~\ref{sec:apcomponent}, we compare several possible structures in key components of the proposed MedMAP, which guide us to find the most suitable encoder and anchor.
In Sec.~\ref{sec:Hyperparamters}, the selection of hyperparameters will be introduced.



\subsection{Quantitative Comparisons}
\label{sec:QuantitativeComparison}
\subsubsection{Averaged results across prediction classes and MM} 
As seen from Table~\ref{tab:avergaImp}, we can conclude that the proposed MedMAP effectively promotes the prediction performance. Specifically, on the BraTS2018 dataset, it improves $3.68\%$, $7.67\%$, and $2.30\%$ of the Dice Scores on PMKL, mmFormer, and ACN respectively. Improvements are $1.31\%$, $6.76\%$, and $0.38\%$ of the Dice Scores on PMKL, mmFormer, and ACN respectively on the BraTS2020 dataset.

\subsubsection{Performance improvement on different prediction classes}
In both Table~\ref{tab:MissingOn18}, the right-most columns of each dataset (Total Avg.) indicate the average performance across different MM. 
From them, we can identify when the proposed MedMAP is absent, the mmFormer achieves $83.67\%$, $73.04\%$, and $55.15\%$, respectively, on the BraTS2018 dataset, and $85.79\%$, $73.23\%$, and $59.81\%$ on BraTS2020 when predicting WT, TC, and ET. 
Notably, on both datasets, predictions obtain lower dice scores on TC and ET than on the WT.
The presence of the proposed MedMAP significantly promotes the degraded predictions of TC and ET by $4.97\%$ and $11.72\%$ respectively on BraTS2018, and $2.99\%$ and $15.29\%$ on BraTS2020. 
Moreover, such achievement can also be identified in the mmFormer predictions on the WT by $2.59\%$ and $2.00\%$ respectively on both involved datasets. 

Furthermore, the proposed MedMAP also boosts the performance of the other two backbones. Specifically, it improves PMKL by $3.73\%$, $3.23\%$, and $4.09\%$ respectively on WT, TC, and ET of BraTS2018, and $0.99\%$, $1.18\%$, and $1.65\%$ on BraTS2020. It improves ACN by $1.57\%$, $2.32\%$, and $4.43\%$ on BraTS2018, as well as $0.24\%$, $0.32\%$, and $0.58\%$ on BraTS2020. 



\subsubsection{MedMAP improvement in different MM}
Since there are four modalities in each of the BraTS2018 and BraTS2020 datasets, there are in total $2^4-1=15$ possible MM. We will evaluate each of them in Table~\ref{tab:MissingOn18}.  Columns in the tables are categorized and split by the number of MM apart from the right-most column: sub-columns from the left to the right contain results when $N$ of the available four modalities are absent, where $N \in [3,2,1]$. When $N=1$, all the detailed results are shown and when $N=2$ and $N=3$, the average results are shown. Although there are some slight decreases when adding our MedMAP, the improvement can be seen in most scenarios.

\subsection{Qualitative Comparison}
\label{sec:QualitativeComparison}
We can observe from Fig.~\ref{fig:visual_all} that 
in general, the performance of the model with $P_{mix}$is superior to that without it. When comparing the first and second rows, as well as the first two columns of the third row (cases with three or two missing modalities), we find that the model without $P_{mix}$ struggles to predict NCR/NET, with instances of both missed detections and false detections. Upon observing the predictions of the first and fourth rows, we observe that the model with $P_{mix}$ demonstrates greater accuracy in predicting ED, with smoother ED edges and the ability to predict details. In conclusion, in cases with $N$ modalities missing, the model with $P_{mix}$ performs consistently better.


\begin{figure*}[ht]
\centering
\hfill 
\begin{subfigure}[b]{0.105\textwidth}
    \includegraphics[width=\textwidth]{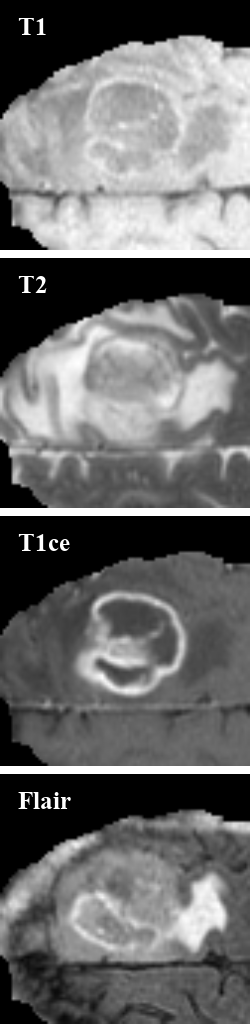}
    \caption{GT}
\end{subfigure}
\hfill 
\begin{subfigure}[b]{0.43\textwidth}
    \includegraphics[width=\textwidth]{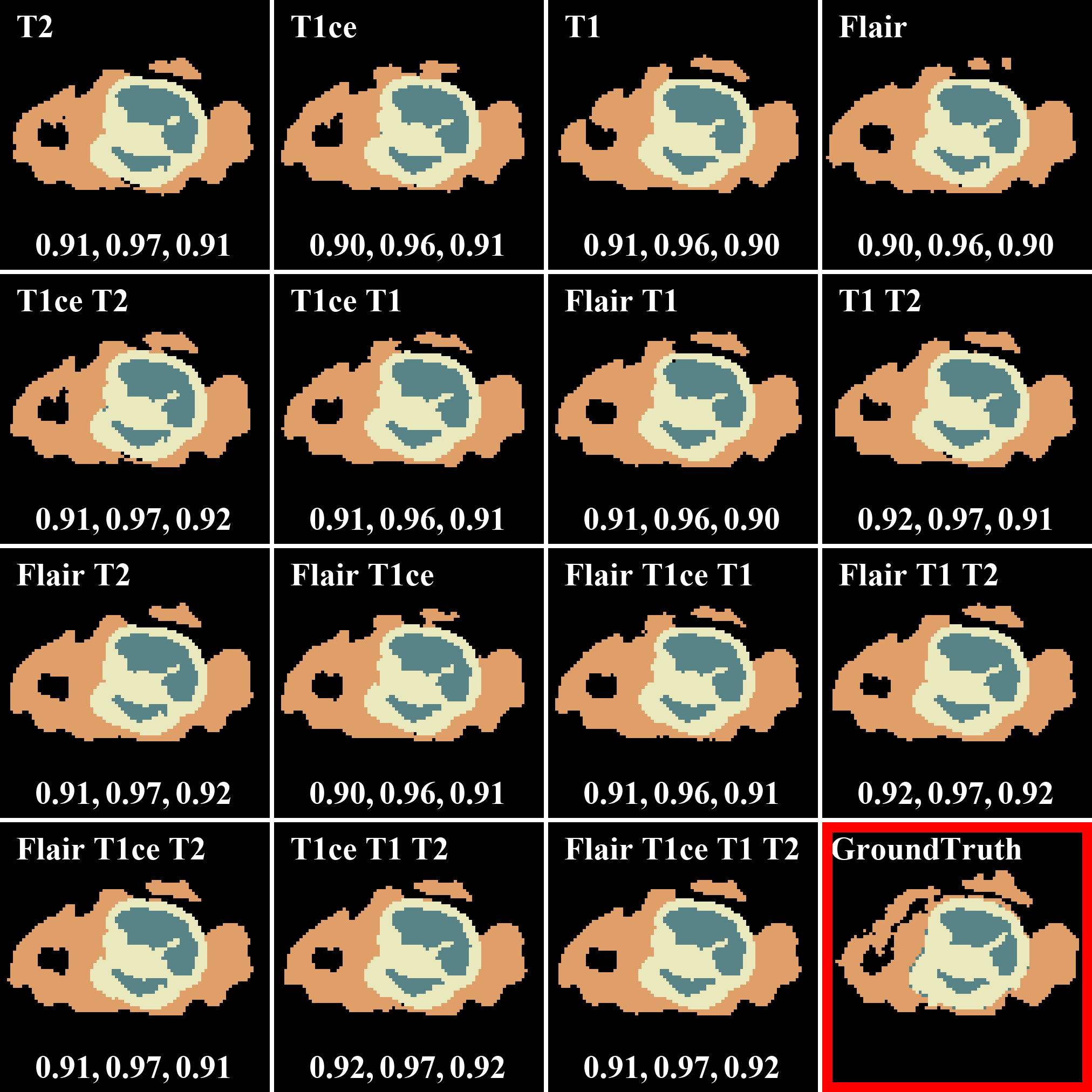}
    \caption{with $P_{mix}$}
\end{subfigure}
\hfill 
\begin{subfigure}[b]{0.43\textwidth}
    \includegraphics[width=\textwidth]{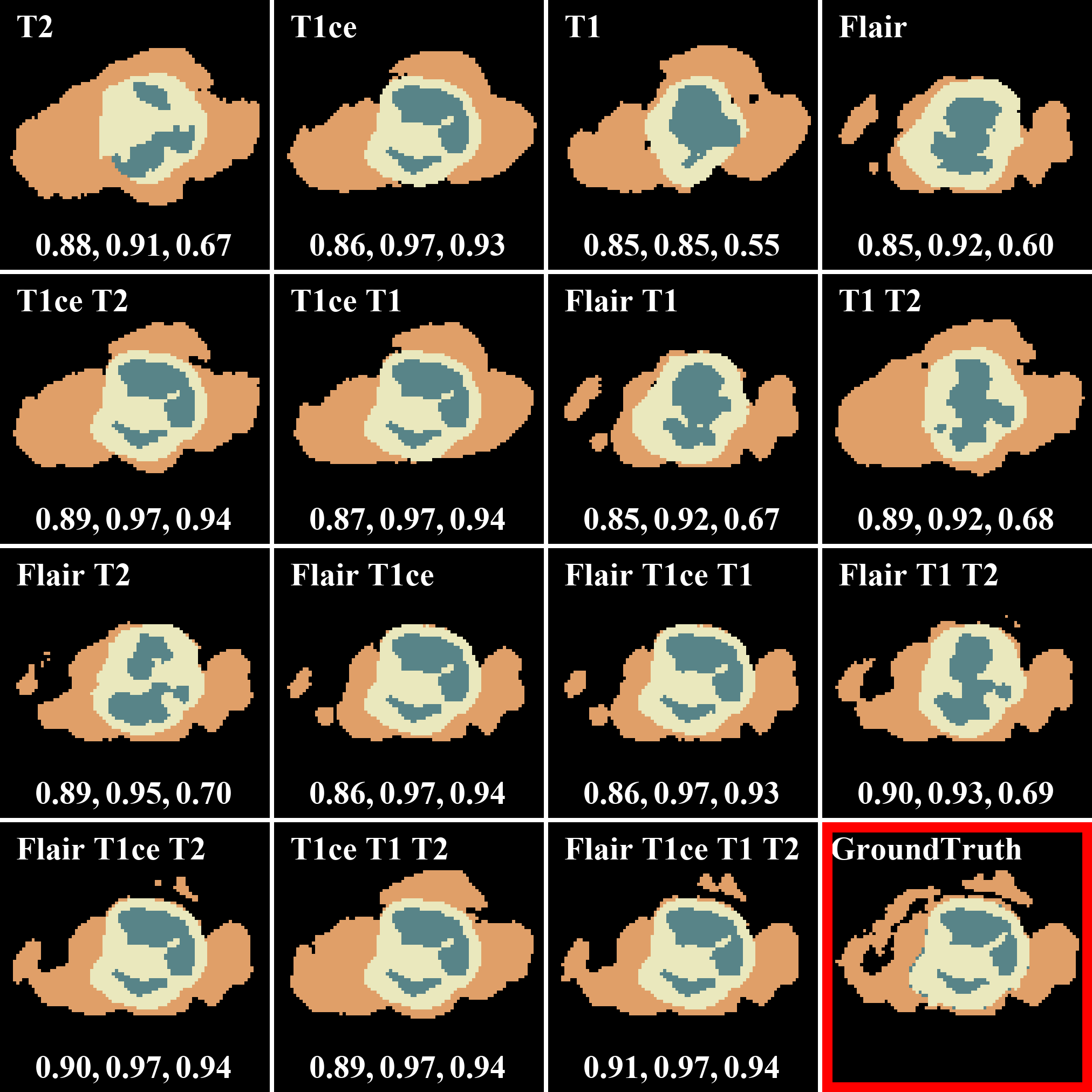}
    \caption{without $P_{mix}$}
\end{subfigure}
\caption{Qualitative segmentation results of mmFormer with $P_{mix}$ and without $P_{mix}$ on BraTS2018 under all missing scenarios. Below each sub-figure is the Dice. The Dice texts from left to right are WT, TC, and ET. Different colors represent different organs: Blue: NCR/NET, Orange: ED, and Yellow: ET. Captions on the upper-left corners indicate present modalities. Bottom-right corner is the groundtruth labels.}
\label{fig:visual_all}
\end{figure*}

\begin{figure*}[h]
\centering

\hfill 
\begin{subfigure}[b]{0.0725\textwidth}
    \includegraphics[width=\textwidth]{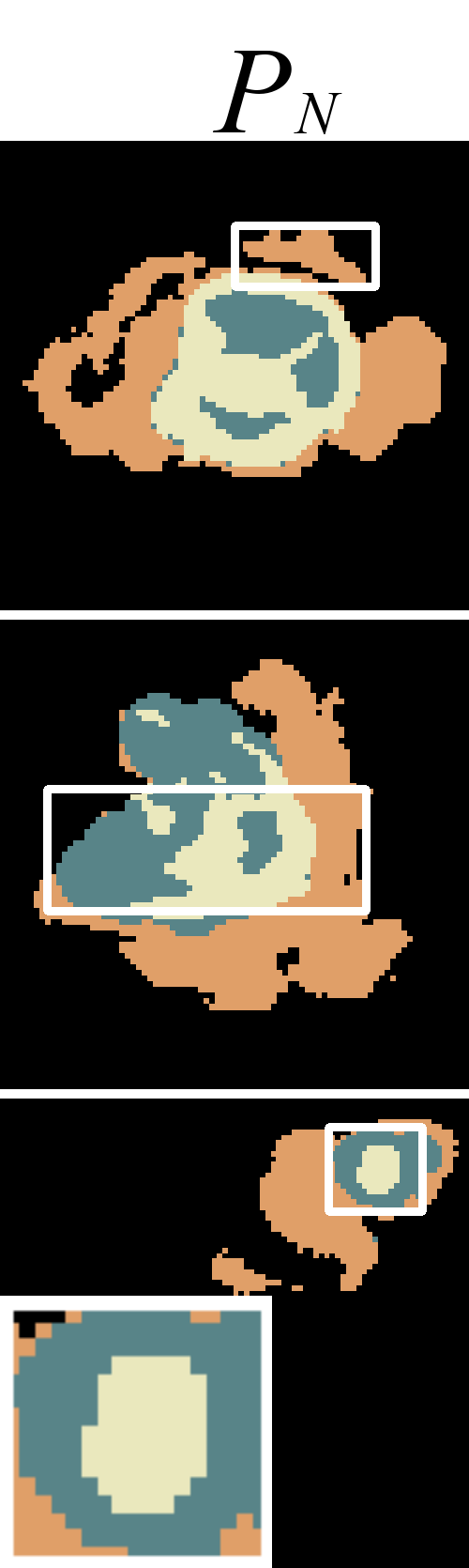}
    \caption{GT}
    \label{fig:withp}
\end{subfigure}
\hfill 
\begin{subfigure}[b]{0.22\textwidth}
    \includegraphics[width=\textwidth]{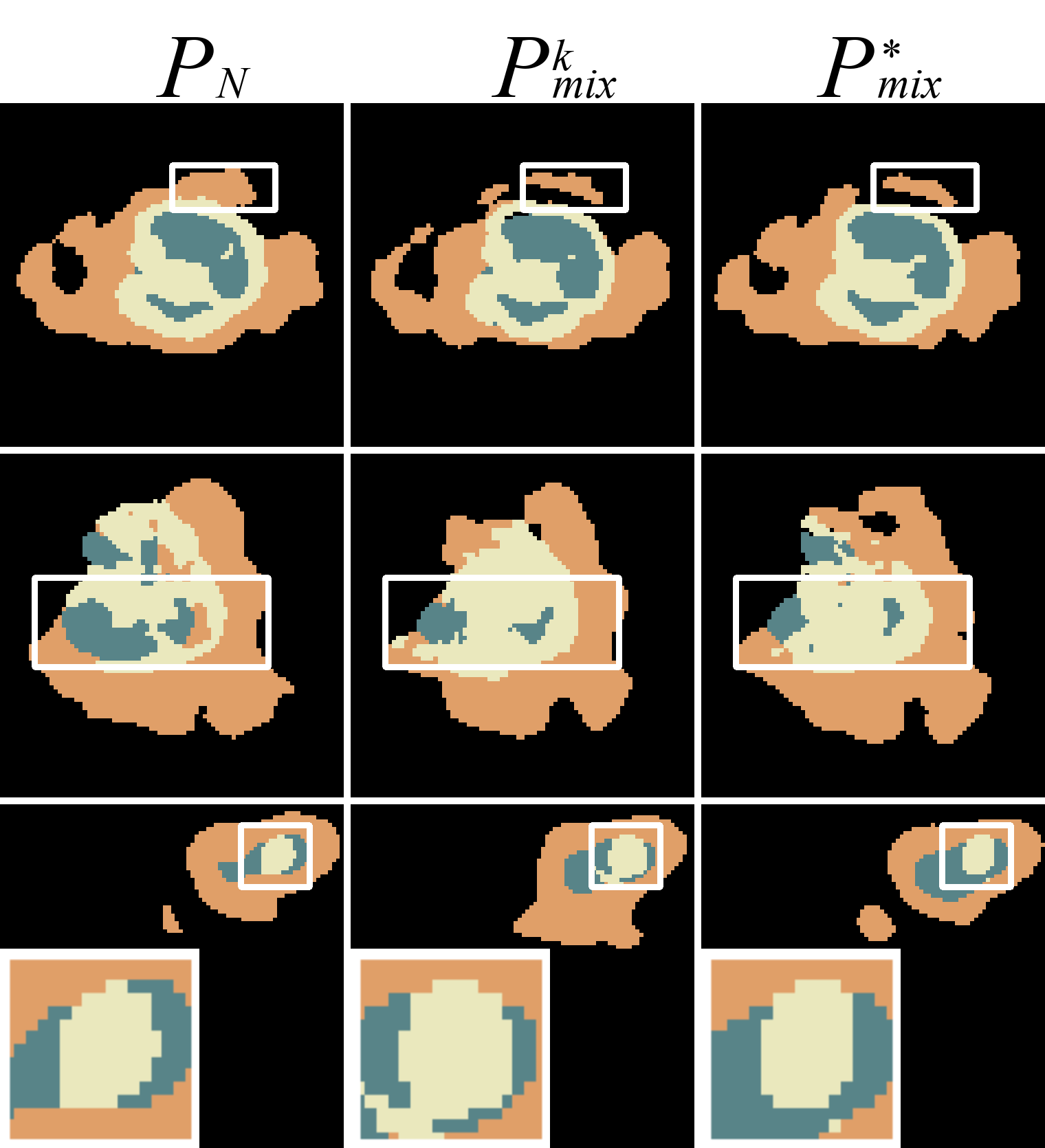}
    \caption{T1}
    \label{fig:withp}
\end{subfigure}
\hfill 
\begin{subfigure}[b]{0.22\textwidth}
    \includegraphics[width=\textwidth]{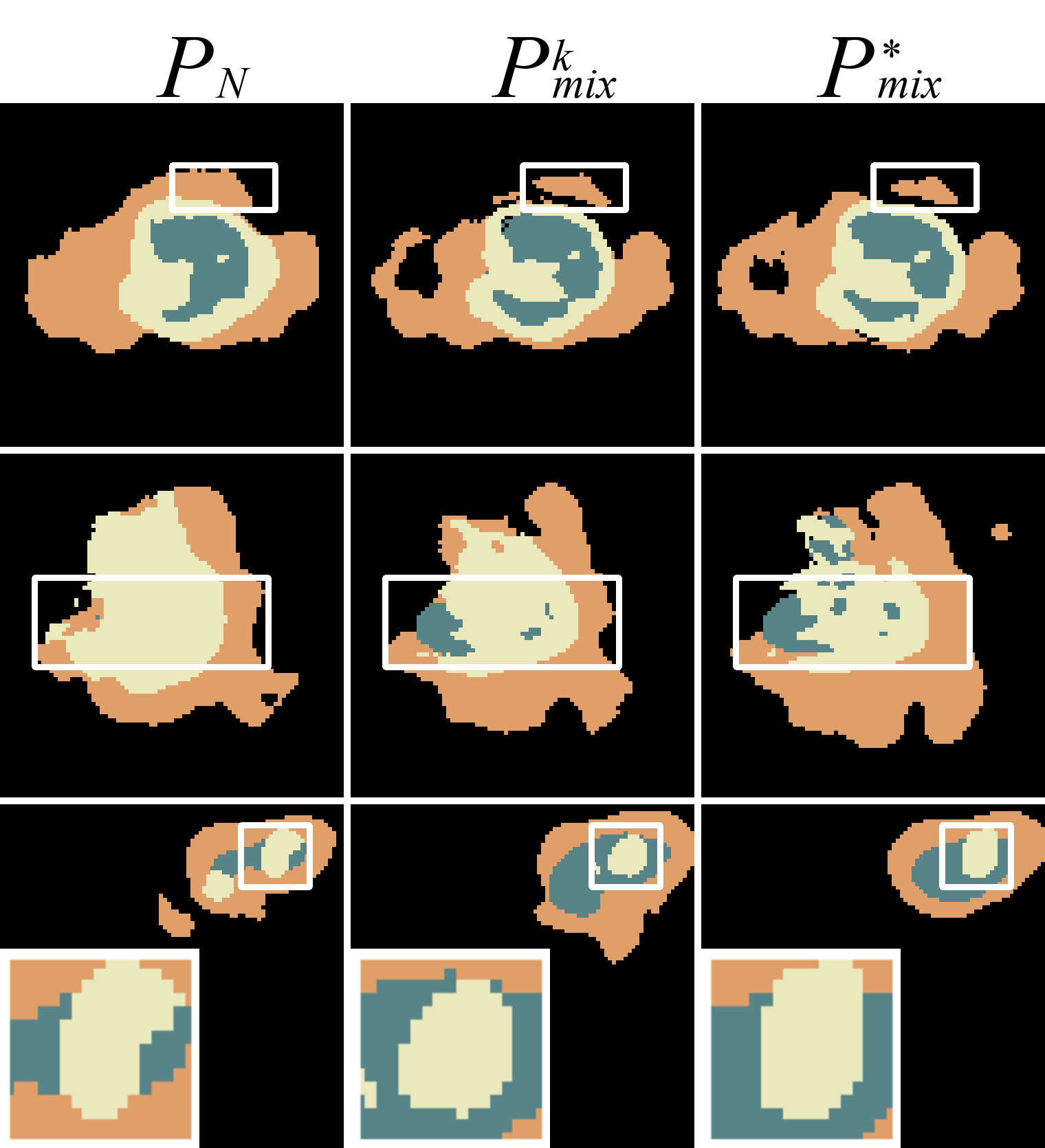}
    \caption{T2}
    \label{fig:withp}
\end{subfigure}
\hfill 
\begin{subfigure}[b]{0.22\textwidth}
    \includegraphics[width=\textwidth]{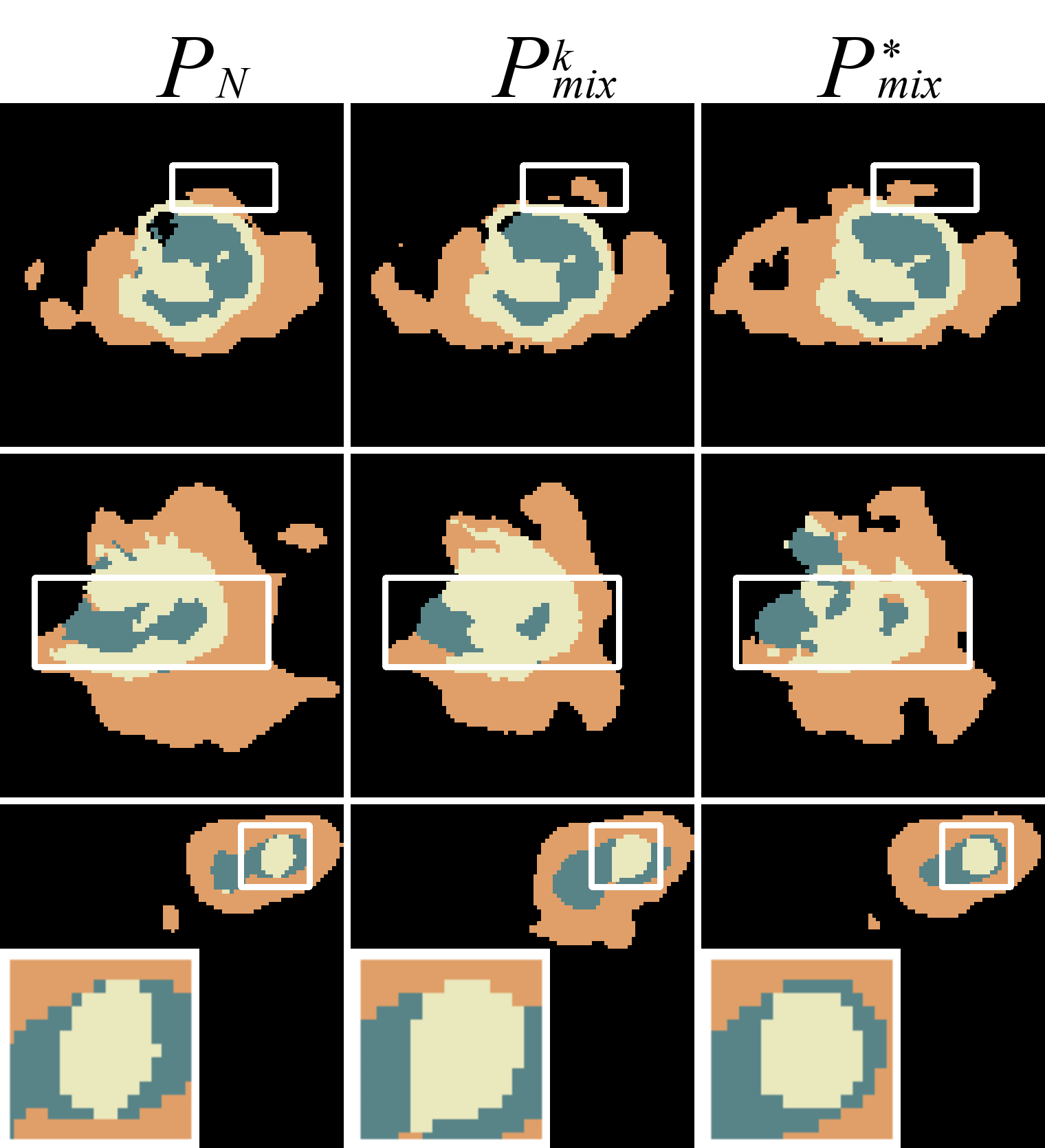}
    \caption{T1ce}
    \label{fig:withp}
\end{subfigure}
\hfill 
\begin{subfigure}[b]{0.22\textwidth}
    \includegraphics[width=\textwidth]{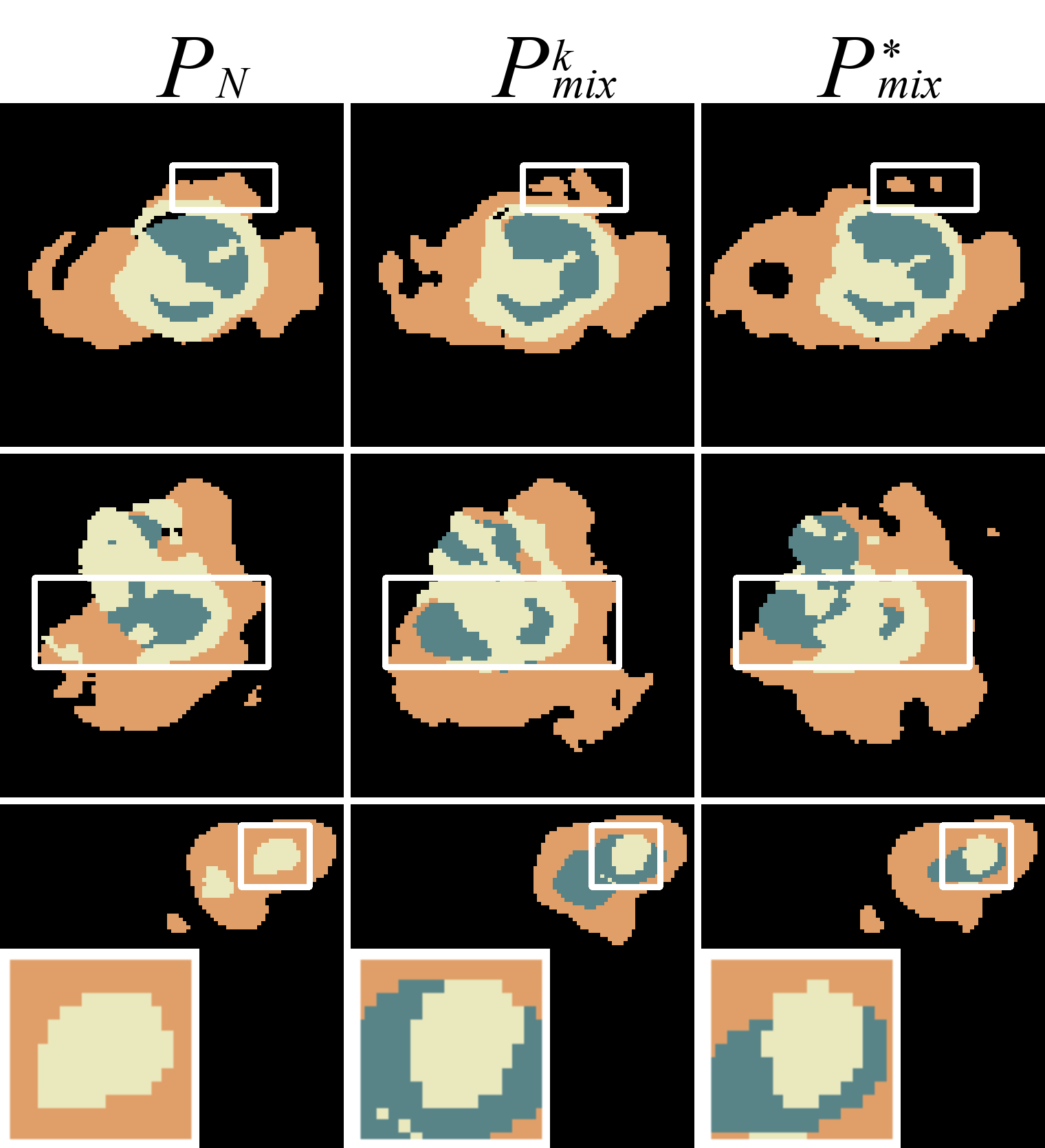}
    \caption{Flair}
    \label{fig:withp}
\end{subfigure}
\label{fig:withoutp}
\caption{Qualitative segmentation results of mmFormer with $P_{N}$, $P^{k}_{mix}$ and $P^{*}_{mix}$ on BraTS2018 where three modalities are missing. }
\label{fig:visual_ablation}
\end{figure*}

\subsection{Comparison of Alignment Paradigm components}
\label{sec:apcomponent}
As part of the proposed MedMAP with an encoder and an aligning anchor, we conduct ablation studies to evaluate the effectiveness of both components. Regarding the encoder, we compare the enhanced and the non-shared encoder. 
For the anchor $P_{mix}$, we compare the  proposed $P_{mix}$ with the standard normal distribution~\cite{dorent2019hetero}. 
Furthermore, we assess two different $P_{mix}$ options namely $P^{k}_{mix}$ and $P^{*}_{mix}$.

\begin{figure}
    \centering
            \includegraphics[width=\linewidth]{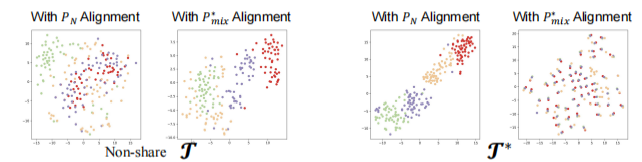}
    \caption{T-SNE visualization of different anchors and encoders. Different colors mean different modalities.}
    \label{fig:ablation_tsne}
\end{figure}

\subsubsection{Encoder}
As discussed in Sec.~\ref{sec:ap}, the proposed alignment paradigm will only be possible when latent features are placed in the same space, by using some encoding architectures.
In this sense, we would like to evaluate the prediction performance based on two different encoder configurations: the non-shared encoder and the enhanced encoder, termed as non-shared $\mathcal{T}$ and $\mathcal{T^*}$,  respectively. 
In non-shared $\mathcal{T}$, an independent encoder will be used to process each modality and channelized individually.
Consequently, with $J$ modalities, there will be $J$ individual encoders.
The total number of channels for the non-shared $\mathcal{T}$ is $J\cdot c$, where $c$ represents the number of channels of the input feature in one modality. 
The number of channels of $\mathcal{T^*}$ is set to be $J\cdot c$ for fair comparison given an equivalent amount of training weights.

To examine closer the potential difference obtained, we evaluate the simplest involved backbone PMKL~\cite{chen2021learning} based on U-Net architecture~\cite{cciccek20163d}, and in the most challenging testing scenario when three over four in BraTS2018 modalities are missing.
As such, the simplest model is sensitive to the change in model structure and the change of the information amount received.
Comparing the results are presented in Table~\ref{tab:teacherResultencoder}, we can find that incorporating non-shared  $\mathcal{T}$ achieves a slight improvement in each modality. Further gains on the average dice scores can be identified by introducing $\mathcal{T^*}$. 
Figure~\ref{fig:ablation_tsne} illustrates t-SNE~\cite{van2008visualizing} embeddings, where we can find that incorporating $\mathcal{T^*}$ is able to narrow gaps between modalities. Therefore, $\mathcal{T^*}$ is chosen as an appropriate encoder.

\begin{table}[t]
\scriptsize
\tiny
\centering
\caption{Comparison results with different types of encoders. Best results are highlighted as \textbf{bold}.}
\setlength{\tabcolsep}{3mm}{
\begin{tabular}{c|cccc|c|c}
\toprule
Present Modality           & WT    & TC    & EC    & Average & Avg. & $\mathcal{T}$   \\ 
\midrule
\midrule
\multirow{1}{*}{T1} &73.31 &64.26 &{41.37} &58.98 & \multirow{4}{*}{65.30} & \multirow{4}{*}{without $\mathcal{T}$}\\
\multirow{1}{*}{T2}    &81.00 &67.92& {47.09} &{65.34} &  \\
\multirow{1}{*}{T1ce}   & 70.50 & 76.92 & 75.54 & 74.32 &\\
\multirow{1}{*}{Flair}   & 84.11& 62.21& 41.35 &62.56 & \\
\midrule
\multirow{1}{*}{T1}  &  72.99&63.28&38.92&58.40 & \multirow{4}{*}{65.72} & \multirow{4}{*}{non-share $\mathcal{T}$}\\
\multirow{1}{*}{T2}  &    81.11 & {68.40}& 45.99& 65.17 &\\
\multirow{1}{*}{T1ce}  &  {74.51}&{78.99}&{73.85}&{75.78}& \\
\multirow{1}{*}{Flair}  &  83.98&{67.10}&39.48&63.52& \\
\midrule
\multirow{1}{*}{T1}   & {75.50}  & {65.98}   & 40.09  &{60.53} &\multirow{4}{*}{\textbf{66.40}} & \multirow{4}{*}{\textbf{$\mathcal{T^*}$}}\\               
\multirow{1}{*}{T2}    &{82.68}  &  67.14  &  {44.82} & 64.88&\\               
\multirow{1}{*}{T1ce}    &74.00 &  78.64  & 72.71  &  {75.12}&\\               
\multirow{1}{*}{Flair}    &{84.74}   &   67.07 & {43.42}  &{65.07}&\\               
\midrule
\end{tabular}}
\label{tab:teacherResultencoder}
\end{table}

\subsubsection{Comparison with standard normal distribution}
We will compare the prediction performance achieved by setting the anchor to $P_N$ which is a standard normal distribution.
In particular, given $J$ modalities, $P_{mix}$ is achieved by minimizing $\mathbb{E}[2\log {1}/{v(\rmZ^*_j)} + {v(\rmZ^*_j)^2 + ({\Bar{\rmZ}^*_j})^2}/{2} - {1}/{2}]$, where 
$\Bar{\rm Z}^*_j$ and $v(\rm Z^*_j)$ denote the mean and variance of ${\rmZ}^*_j$, which are optimized during training.

\begin{table}[t]
\tiny
\centering
\setlength\tabcolsep{2.5pt}
\caption{Comparison of segmentation results in each class and average dice scores with different anchor $P_{mix}$s.}
\label{tab:ablationp}
\begin{tabular}{c|cccc|cccc|cccc|c}
\toprule
\multicolumn{1}{c|}{\multirow{2}{*}{Method}} & \multicolumn{4}{c|}{ with $P_{N}$} & \multicolumn{4}{c|}{ with $P^k_{mix}$}         & \multicolumn{4}{c|}{ with $P^*_{mix}$ } & \multicolumn{1}{c}{\multirow{2}{*}{Modality}}\\
\cmidrule{2-13} 
\multicolumn{1}{c|}{}                       & WT    & TC    & EC    & Avg.     &WT& TC    & EC    & Avg.          & WT     & TC     & EC    & Avg.  &        \\
\midrule
\midrule
Teachers & 84.14         & 77.44      & 74.82      &  79.13 &  86.34   & 79.75 & 76.91 & 81.00 &  86.81& 79.22 & 77.85 & \textbf{81.29} &Full\\
\midrule
PMKL~\cite{chen2017rethinking}                                         &{75.60}&	65.59	&43.31 &  61.50  &75.06	&66.80	&41.43	&61.10  &72.04	&{68.39}	&{47.66}	&\textbf{62.70} & \multirow{3}{*}{T1}\\
mmFormer~\cite{zhang2022mmformer}&83.86&75.70&65.94&75.17&84.83&76.80&68.87&76.83&85.71&76.63&70.77&\textbf{77.70}&\\
ACN~\cite{wang2021acn}&74.52&61.65&45.99&60.72&74.95&66.32&45.82&62.36&79.31&67.13&50.64&\textbf{65.69}&\\
\midrule
PMKL~\cite{chen2017rethinking}                                            &80.46&	69.06	&48.38&	65.97  &82.47	&69.56	&45.78	&65.94  &{83.77}	&{69.91}	&45.17	&\textbf{66.28} &  \multirow{3}{*}{T2}\\
mmFormer~\cite{zhang2022mmformer}& 80.34&69.94&52.79 &67.69&84.37&76.26& 69.29&76.64&85.32&77.76&69.02&\textbf{77.37}\\
ACN~\cite{wang2021acn}
&85.01&71.23&47.39&67.88&86.57&74.65&48.27&\textbf{69.83}&86.68&74.41&47.93&69.67&\\
\midrule
PMKL~\cite{chen2017rethinking}                                         &73.89&	80.86&	77.48 &77.41 &77.46	& 80.71	& 75.40	& \textbf{77.86}    &75.97	&80.35	&76.44	&77.58& \multirow{3}{*}{T1ce} \\
mmFormer~\cite{zhang2022mmformer}&82.23&76.32&67.70 &75.42&83.68&76.09&69.81&76.53&85.50&77.34&70.55&\textbf{77.80}\\
ACN~\cite{wang2021acn}&77.86&67.07&77.63&74.19&78.07&70.05&76.73&74.95&78.09&70.42&78.03&\textbf{75.51}\\
\midrule
PMKL~\cite{chen2017rethinking}                                           &84.09&	66.78&	42.13& 64.33   &83.84	&68.89	&41.41	&64.71    &85.70	& 68.44	& 43.57& 	\textbf{65.90} &  \multirow{3}{*}{Flair} \\
mmFormer~\cite{zhang2022mmformer}&80.45&70.50&54.25 &68.40&83.90& 76.05&65.55&75.17&84.51&76.29&70.23&\textbf{77.01}\\
ACN~\cite{wang2021acn}&85.01&71.23&47.39&67.88&87.16&77.06&46.08&70.10&87.66&76.64&46.61&\textbf{70.30}\\

\bottomrule
\end{tabular}%
\end{table}

As for KD and DA, the performance achieved by student models is dependent on the respective teacher models. Namely, a better teacher will probably provide better guidance to the respective student given the same architectures and knowledge distillation method. We will report the performance of both teachers and students.
For SLS approaches, we only present the performance of the model itself. 
The first line of Table~\ref{tab:ablationp} presents the results of teachers among anchors of $P_N$ and $P_{mix}$s. They will be used to guide the downstream students by KD and DA approaches.
Other lines of Table~\ref{tab:ablationp} show the results obtained by students guided by KD, DA, as well as SLS methods across different anchors. 
It is evident that teachers with latent space aligned to $P_{mix}$ exhibit better segmentation performance, with $81\%$ and $81.29\%$ dices of $P_{mix}$ and $79.13\%$ dice of  $P_{N}$. Aligning to $P_{mix}$ will also achieve superior performance than aligning to $P_N$.

The t-SNE~\cite{van2008visualizing} embeddings depicted in Figure~\ref{fig:ablation_tsne} also demonstrate consistent finding when $\mathcal{T^*}$ is applied, $P_{mix}$ effectively aligns each modality together, as the distribution centers of each modality are almost overlapping. This indicates that the gaps between modalities are narrowed. However, when $P_N$ is used as the anchor, the modality gaps are not sufficiently narrowed, failing to fill the space adequately.
Furthermore, in reference with Figure~\ref{fig:visual_ablation}, it is observed that models with $P_{mix}$ perform better than those with $P_N$.

\subsubsection{$\mathit{P^k_{mix}}$ and $\mathit{P^*_{mix}}$}
In Sec.~\ref{sec:ap}, $P^k_{mix}$ describes the latent space distribution of a specific modality and $P^*_{mix}$ is a weighted combination of all modalities.
We present comparison results when only one modality is available in Table~\ref{tab:ablationp}. Clearly, we can observe that the model with $P^*_{mix}$ achieves better prediction performance than that obtained by $P^k_{mix}$. 
Figure~\ref{fig:visual_ablation} displays segmentation visualization examples, where we can observe that $P^*_{mix}$ consistently obtains the most superior prediction performance among all other anchors.

\subsection{Ablation Studies of Hyperparamters}
\label{sec:Hyperparamters}
\subsubsection{Initialization of $\mathit{P^{*}_{mix}}$}
As discussed in Sec.~\ref{sec:pxin}, initialization of $w_j$ is crucial for the convergence in optimizing the model. 
Since $P^{*}_{mix}$ is an adaptive version of $P^{k}_{mix}$,
we utilize some prior knowledge from $P^{k}_{mix}$ to initialize the parameter. 
Particularly, in this paper, we set 
the initialization weight for T1, T2, T1ce, and Flair to be $0$, $0$, $0$, and $1$ respectively.


\subsubsection{Parameter of alignment weight}
We have 
found that the alignment loss is not sensitive to the hyperparameter. In this paper, we empirically set the alignment loss to one-eighth of the original loss.

\section{Conclusion}
\label{conclusion}
In this paper, we present a novel Medical Modality Alignment Paradigm to mitigate the modality gaps whilst learning simultaneously invariant feature representations in segmenting brain tumors with missing modalities. 
Specifically, we invent an alignment paradigm for the models with an encoder encouraging that all modalities are in the same space and a latent space distribution $P^*_{mix}$ as the aligning anchor.
Meanwhile, we provide theoretical support for the proposed alignment paradigm, demonstrating that individual alignment of each modality to $P_{mix}$ certifies a tighter evidence lower bound than mapping all modalities as a whole.
Extensive experiments have demonstrated the superiority of the proposed paradigm over several latest state-of-the-art approaches, enabling  better segment the brain tumor with missing modalities.

\bibliographystyle{IEEEtran}
\bibliography{tmi}

\begin{thebibliography}{10}
\providecommand{\url}[1]{#1}
\csname url@samestyle\endcsname
\providecommand{\newblock}{\relax}
\providecommand{\bibinfo}[2]{#2}
\providecommand{\BIBentrySTDinterwordspacing}{\spaceskip=0pt\relax}
\providecommand{\BIBentryALTinterwordstretchfactor}{4}
\providecommand{\BIBentryALTinterwordspacing}{\spaceskip=\fontdimen2\font plus
\BIBentryALTinterwordstretchfactor\fontdimen3\font minus \fontdimen4\font\relax}
\providecommand{\BIBforeignlanguage}[2]{{%
\expandafter\ifx\csname l@#1\endcsname\relax
\typeout{** WARNING: IEEEtran.bst: No hyphenation pattern has been}%
\typeout{** loaded for the language `#1'. Using the pattern for}%
\typeout{** the default language instead.}%
\else
\language=\csname l@#1\endcsname
\fi
#2}}
\providecommand{\BIBdecl}{\relax}
\BIBdecl

\bibitem{chen2021learning}
C.~Chen, Q.~Dou, Y.~Jin, Q.~Liu, and P.~A. Heng, ``Learning with privileged multimodal knowledge for unimodal segmentation,'' \emph{IEEE transactions on medical imaging}, vol.~41, no.~3, pp. 621--632, 2021.

\bibitem{zhao2022modality}
Z.~Zhao, H.~Yang, and J.~Sun, ``Modality-adaptive feature interaction for brain tumor segmentation with missing modalities,'' in \emph{International Conference on Medical Image Computing and Computer-Assisted Intervention}.\hskip 1em plus 0.5em minus 0.4em\relax Springer, 2022, pp. 183--192.

\bibitem{lindig2018evaluation}
T.~Lindig, R.~Kotikalapudi, D.~Schweikardt, P.~Martin, F.~Bender, U.~Klose, U.~Ernemann, N.~K. Focke, and B.~Bender, ``Evaluation of multimodal segmentation based on 3d t1-, t2-and flair-weighted images--the difficulty of choosing,'' \emph{Neuroimage}, vol. 170, pp. 210--221, 2018.

\bibitem{menze2014multimodal}
B.~H. Menze, A.~Jakab, S.~Bauer, J.~Kalpathy-Cramer, K.~Farahani, J.~Kirby, Y.~Burren, N.~Porz, J.~Slotboom, R.~Wiest \emph{et~al.}, ``The multimodal brain tumor image segmentation benchmark (brats),'' \emph{IEEE transactions on medical imaging}, vol.~34, no.~10, pp. 1993--2024, 2014.

\bibitem{patil2013medical}
D.~D. Patil and S.~G. Deore, ``Medical image segmentation: a review,'' \emph{International Journal of Computer Science and Mobile Computing}, vol.~2, no.~1, pp. 22--27, 2013.

\bibitem{maier2017isles}
O.~Maier, B.~H. Menze, J.~Von~der Gablentz, L.~H{\"a}ni, M.~P. Heinrich, M.~Liebrand, S.~Winzeck, A.~Basit, P.~Bentley, L.~Chen \emph{et~al.}, ``Isles 2015-a public evaluation benchmark for ischemic stroke lesion segmentation from multispectral mri,'' \emph{Medical image analysis}, vol.~35, pp. 250--269, 2017.

\bibitem{ding2021rfnet}
Y.~Ding, X.~Yu, and Y.~Yang, ``Rfnet: Region-aware fusion network for incomplete multi-modal brain tumor segmentation,'' in \emph{Proceedings of the IEEE/CVF international conference on computer vision}, 2021, pp. 3975--3984.

\bibitem{chen2023query}
D.~Chen, Y.~Qiu, and Z.~Wang, ``Query re-training for modality-gnostic incomplete multi-modal brain tumor segmentation,'' in \emph{International Conference on Medical Image Computing and Computer-Assisted Intervention}.\hskip 1em plus 0.5em minus 0.4em\relax Springer, 2023, pp. 135--146.

\bibitem{qiu2023scratch}
Y.~Qiu, D.~Chen, H.~Yao, Y.~Xu, and Z.~Wang, ``Scratch each other's back: Incomplete multi-modal brain tumor segmentation via category aware group self-support learning,'' in \emph{Proceedings of the IEEE/CVF International Conference on Computer Vision}, 2023, pp. 21\,317--21\,326.

\bibitem{qiu2023modal}
Y.~Qiu, Z.~Zhao, H.~Yao, D.~Chen, and Z.~Wang, ``Modal-aware visual prompting for incomplete multi-modal brain tumor segmentation,'' in \emph{Proceedings of the 31st ACM International Conference on Multimedia}, 2023, pp. 3228--3239.

\bibitem{liu2021incomplete}
Y.~Liu, L.~Fan, C.~Zhang, T.~Zhou, Z.~Xiao, L.~Geng, and D.~Shen, ``Incomplete multi-modal representation learning for alzheimer’s disease diagnosis,'' \emph{Medical Image Analysis}, vol.~69, p. 101953, 2021.

\bibitem{hu2020knowledge}
M.~Hu, M.~Maillard, Y.~Zhang, T.~Ciceri, G.~La~Barbera, I.~Bloch, and P.~Gori, ``Knowledge distillation from multi-modal to mono-modal segmentation networks,'' in \emph{Medical Image Computing and Computer Assisted Intervention--MICCAI 2020: 23rd International Conference, Lima, Peru, October 4--8, 2020, Proceedings, Part I 23}.\hskip 1em plus 0.5em minus 0.4em\relax Springer, 2020, pp. 772--781.

\bibitem{wang2023prototype}
S.~Wang, Z.~Yan, D.~Zhang, H.~Wei, Z.~Li, and R.~Li, ``Prototype knowledge distillation for medical segmentation with missing modality,'' in \emph{ICASSP 2023-2023 IEEE International Conference on Acoustics, Speech and Signal Processing (ICASSP)}.\hskip 1em plus 0.5em minus 0.4em\relax IEEE, 2023, pp. 1--5.

\bibitem{azad2022smu}
R.~Azad, N.~Khosravi, and D.~Merhof, ``Smu-net: Style matching u-net for brain tumor segmentation with missing modalities,'' in \emph{International Conference on Medical Imaging with Deep Learning}.\hskip 1em plus 0.5em minus 0.4em\relax PMLR, 2022, pp. 48--62.

\bibitem{wang2021acn}
Y.~Wang, Y.~Zhang, Y.~Liu, Z.~Lin, J.~Tian, C.~Zhong, Z.~Shi, J.~Fan, and Z.~He, ``Acn: adversarial co-training network for brain tumor segmentation with missing modalities,'' in \emph{Medical Image Computing and Computer Assisted Intervention--MICCAI 2021: 24th International Conference, Strasbourg, France, September 27--October 1, 2021, Proceedings, Part VII 24}.\hskip 1em plus 0.5em minus 0.4em\relax Springer, 2021, pp. 410--420.

\bibitem{havaei2016hemis}
M.~Havaei, N.~Guizard, N.~Chapados, and Y.~Bengio, ``Hemis: Hetero-modal image segmentation,'' in \emph{Medical Image Computing and Computer-Assisted Intervention--MICCAI 2016: 19th International Conference, Athens, Greece, October 17-21, 2016, Proceedings, Part II 19}.\hskip 1em plus 0.5em minus 0.4em\relax Springer, 2016, pp. 469--477.

\bibitem{varsavsky2018pimms}
T.~Varsavsky, Z.~Eaton-Rosen, C.~H. Sudre, P.~Nachev, and M.~J. Cardoso, ``Pimms: permutation invariant multi-modal segmentation,'' in \emph{Deep Learning in Medical Image Analysis and Multimodal Learning for Clinical Decision Support: 4th International Workshop, DLMIA 2018, and 8th International Workshop, ML-CDS 2018, Held in Conjunction with MICCAI 2018, Granada, Spain, September 20, 2018, Proceedings 4}.\hskip 1em plus 0.5em minus 0.4em\relax Springer, 2018, pp. 201--209.

\bibitem{zhang2022mmformer}
Y.~Zhang, N.~He, J.~Yang, Y.~Li, D.~Wei, Y.~Huang, Y.~Zhang, Z.~He, and Y.~Zheng, ``mmformer: Multimodal medical transformer for incomplete multimodal learning of brain tumor segmentation,'' in \emph{International Conference on Medical Image Computing and Computer-Assisted Intervention}.\hskip 1em plus 0.5em minus 0.4em\relax Springer, 2022, pp. 107--117.

\bibitem{choi2021robustnet}
S.~Choi, S.~Jung, H.~Yun, J.~T. Kim, S.~Kim, and J.~Choo, ``Robustnet: Improving domain generalization in urban-scene segmentation via instance selective whitening,'' in \emph{Proceedings of the IEEE/CVF Conference on Computer Vision and Pattern Recognition}, 2021, pp. 11\,580--11\,590.

\bibitem{tan2024rethinking}
Z.~Tan, X.~Yang, and K.~Huang, ``Rethinking multi-domain generalization with a general learning objective,'' \emph{arXiv preprint arXiv:2402.18853}, 2024.

\bibitem{ma2022multimodal}
M.~Ma, J.~Ren, L.~Zhao, D.~Testuggine, and X.~Peng, ``Are multimodal transformers robust to missing modality?'' in \emph{Proceedings of the IEEE/CVF Conference on Computer Vision and Pattern Recognition}, 2022, pp. 18\,177--18\,186.

\bibitem{ganin2016domain}
Y.~Ganin, E.~Ustinova, H.~Ajakan, P.~Germain, H.~Larochelle, F.~Laviolette, M.~Marchand, and V.~Lempitsky, ``Domain-adversarial training of neural networks,'' \emph{The journal of machine learning research}, vol.~17, no.~1, pp. 2096--2030, 2016.

\bibitem{li2018deep}
Y.~Li, X.~Tian, M.~Gong, Y.~Liu, T.~Liu, K.~Zhang, and D.~Tao, ``Deep domain generalization via conditional invariant adversarial networks,'' in \emph{Proceedings of the European conference on computer vision (ECCV)}, 2018, pp. 624--639.

\bibitem{li2018domain}
Y.~Li, M.~Gong, X.~Tian, T.~Liu, and D.~Tao, ``Domain generalization via conditional invariant representations,'' in \emph{Proceedings of the AAAI conference on artificial intelligence}, vol.~32, no.~1, 2018.

\bibitem{hu2020domain}
S.~Hu, K.~Zhang, Z.~Chen, and L.~Chan, ``Domain generalization via multidomain discriminant analysis,'' in \emph{Uncertainty in Artificial Intelligence}.\hskip 1em plus 0.5em minus 0.4em\relax PMLR, 2020, pp. 292--302.

\bibitem{10433697}
S.~M. Perlaza, G.~Bisson, I.~Esnaola, A.~Jean-Marie, and S.~Rini, ``Empirical risk minimization with relative entropy regularization,'' \emph{IEEE Transactions on Information Theory}, pp. 1--1, 2024.

\bibitem{ben2010theory}
S.~Ben-David, J.~Blitzer, K.~Crammer, A.~Kulesza, F.~Pereira, and J.~W. Vaughan, ``A theory of learning from different domains,'' \emph{Machine learning}, vol.~79, pp. 151--175, 2010.

\bibitem{bakas2017advancing}
S.~Bakas, H.~Akbari, A.~Sotiras, M.~Bilello, M.~Rozycki, J.~S. Kirby, J.~B. Freymann, K.~Farahani, and C.~Davatzikos, ``Advancing the cancer genome atlas glioma mri collections with expert segmentation labels and radiomic features,'' \emph{Scientific data}, vol.~4, no.~1, pp. 1--13, 2017.

\bibitem{dice1945measures}
L.~R. Dice, ``Measures of the amount of ecologic association between species,'' \emph{Ecology}, vol.~26, no.~3, pp. 297--302, 1945.

\bibitem{dorent2019hetero}
R.~Dorent, S.~Joutard, M.~Modat, S.~Ourselin, and T.~Vercauteren, ``Hetero-modal variational encoder-decoder for joint modality completion and segmentation,'' in \emph{Medical Image Computing and Computer Assisted Intervention--MICCAI 2019: 22nd International Conference, Shenzhen, China, October 13--17, 2019, Proceedings, Part II 22}.\hskip 1em plus 0.5em minus 0.4em\relax Springer, 2019, pp. 74--82.

\bibitem{cciccek20163d}
{\"O}.~{\c{C}}i{\c{c}}ek, A.~Abdulkadir, S.~S. Lienkamp, T.~Brox, and O.~Ronneberger, ``3d u-net: learning dense volumetric segmentation from sparse annotation,'' in \emph{Medical Image Computing and Computer-Assisted Intervention--MICCAI 2016: 19th International Conference, Athens, Greece, October 17-21, 2016, Proceedings, Part II 19}.\hskip 1em plus 0.5em minus 0.4em\relax Springer, 2016, pp. 424--432.

\bibitem{van2008visualizing}
L.~Van~der Maaten and G.~Hinton, ``Visualizing data using t-sne.'' \emph{Journal of machine learning research}, vol.~9, no.~11, 2008.

\bibitem{chen2017rethinking}
L.-C. Chen, G.~Papandreou, F.~Schroff, and H.~Adam, ``Rethinking atrous convolution for semantic image segmentation,'' \emph{arXiv preprint arXiv:1706.05587}, 2017.

\end{thebibliography}

\end{document}